\DeclareRobustCommand\onedot{\futurelet\@let@token\@onedot}
\def\@onedot{\ifx\@let@token.\else.\null\fi\xspace}
\def\eg{\emph{e.g}\onedot} 
\def\ie{\emph{i.e}\onedot} 
\def\cf{\emph{c.f}\onedot}
\newcommand{\tr}{\mbox{$^{\top}$}}
\def\R{{\rm I} \! {\rm R}}
\newcommand{\eq}[1]{Eq~\ref{eq:#1}}
\newcommand{\fig}[1]{Fig~\ref{fig:#1}}
\newcommand{\tab}[1]{Table~\ref{tab:#1}}
\newcommand{\sect}[1]{section~\ref{sec:#1}}
\newcommand{\thm}[1]{Theorem~\ref{thm:#1}}
\newcommand{\pro}[1]{Proposition~\ref{pro:#1}}
\newcommand{\defn}[1]{definition~\ref{def:#1}}
\newcommand{\lem}[1]{lemma~\ref{lem:#1}}
\newcommand{\SKIP}[1]{} 
\newcommand{\mbegin} {\left [ \begin{array}}
\newcommand{\mend}   {\end{array} \right ]}
\newcommand{\detbegin} {\left | \begin{array}}
\newcommand{\detend}   {\end{array} \right |}
\newcommand{\vbegin} {\left ( \begin{array}{c}}
\newcommand{\vend} {\end{array}\right )}
\def\squareforqed{\hbox{\rlap{$\sqcap$}$\sqcup$}}
\def\qed{\ifmmode\squareforqed\else{\unskip\nobreak\hfil
	\penalty50\hskip1em\null\nobreak\hfil\squareforqed
	\parfillskip=0pt\finalhyphendemerits=0\endgraf}\fi}
\def\vec#1{\mathchoice%
	{\mbox{\bf $\displaystyle\bf#1$}}
	{\mbox{\bf $\textstyle\bf#1$}}
	{\mbox{\bf $\scriptstyle\bf#1$}}
	{\mbox{\bf $\scriptscriptstyle\bf#1$}}}
\def\v#1{\protect\vec #1}
\newcommand{\showeqnlabel}{
	\hbox to 0pt{\quad\quad\relax\fbox{\scriptsize\rm\eqnlblx}%
	\gdef\eqnlblx{xxxx}}} \newcommand{\eqnlblx}{}
\def\@eqnnum{\rm (\theequation)\showeqnlabel}
\newcommand{\nofig}[1]{\centerline{\bf Figure here}}
\def\mat#1{\mathchoice{\mbox{\bf$\displaystyle\tt#1$}}
	{\mbox{\bf$\textstyle\tt#1$}}
	{\mbox{\bf$\scriptstyle\tt#1$}}
	{\mbox{\bf$\scriptscriptstyle\tt#1$}}}
\def\m#1{\protect\mat #1}
\newif\ifsupp
\newif\ifarxiv
\newif\iffinal
\newcommand{\calD}{{\cal D}}
\newcommand{\calY}{{\cal Y}}
\newcommand{\dom}{{\cal D}}
\newcommand{\domx}{{{\cal D}_{\!X}}}
\newcommand{\domz}{{{\cal D}_{\!Z}}}
\newcommand{\haty}{{\hat y}}
\newcommand{\argmin}{\operatornamewithlimits{{\rm argmin}}}
\newcommand{\argmax}{\operatornamewithlimits{{\rm argmax}}}
\definecolor{orange}{rgb}{1,0.5,0}
\title{Post-hoc Calibration of Neural Networks by $g$-Layers}
\author{
    Amir Rahimi $^\dagger$\textsuperscript{\rm1}, Thomas Mensink $^\dagger$\textsuperscript{\rm2}, Kartik Gupta \textsuperscript{\rm1,3}, Thalaiyasingam Ajanthan \textsuperscript{\rm1}, \\ Cristian Sminchisescu \textsuperscript{\rm2}, Richard Hartley \textsuperscript{\rm1,2}
}
\begin{document}
\maketitle
\begin{abstract}
Calibration of neural networks is a critical aspect to consider when incorporating machine
learning models in real-world decision-making systems where the confidence of decisions are
equally important as the decisions themselves. 
In recent years, there is a surge of research on neural network calibration and the
majority of the works can be categorized into {\em post-hoc} calibration methods, defined
as methods that learn an additional function to calibrate an already trained base network.
In this work, we intend to understand the post-hoc calibration methods from a theoretical
point of view.
Especially, it is known that minimizing Negative Log-Likelihood (NLL) will lead to a
calibrated network on the training set if the global optimum is attained
~\cite{bishop1994mixture}.
Nevertheless, it is not clear learning an additional function in a post-hoc manner would
lead to calibration in the theoretical sense.
To this end, we prove that even though the base network ($f$) does not lead to the global
optimum of NLL, by adding additional $g$-layers and minimizing NLL by optimizing the
parameters of $g$ one can obtain a calibrated network $g \circ f$.
This not only provides a  less stringent condition to obtain a calibrated 
network but also provides
a theoretical justification of post-hoc calibration methods.
Our experiments on various image classification benchmarks confirm the theory.
    
\end{abstract}

\section{Introduction}
In this paper we consider the problem of calibration of neural networks,
or classification functions in general.  This problem has been considered
in the context of Support Vector Machines~\cite{platt1999probabilistic}, 
but has recently been 
considered in the context of Convolutional Neural Networks
(CNNs)~\cite{guo2017calibration}. 
In this case, a CNN used for classification
takes an input $x \in \domx$, belonging to one of $n$ classes,
and outputs a vector $f(x)$ in $\R^n$, where
the $y$-th component, $f_y(x)$ is often interpreted as a probability
that input $x$ belongs to class $y$.  If this value is to represent probabilities
accurately, then we require that $f_y(x) = P(y ~|~ f(x))$.
In this case, the classifier $f$ is said to be {\em calibrated},
or {\em multi-class calibrated}.
\footnote{
In many papers, \eg~\cite{kull2019beyond} and calibration metrics, \eg ECE~\cite{naeini2015obtaining}
a slightly different condition known as {\em classwise calibration} 
is preferred: ${f_y(x) = P(y ~|~ f_y(x))}$.
}

\begin{figure}[t]
    \centering
   \includegraphics[width=0.8\linewidth]{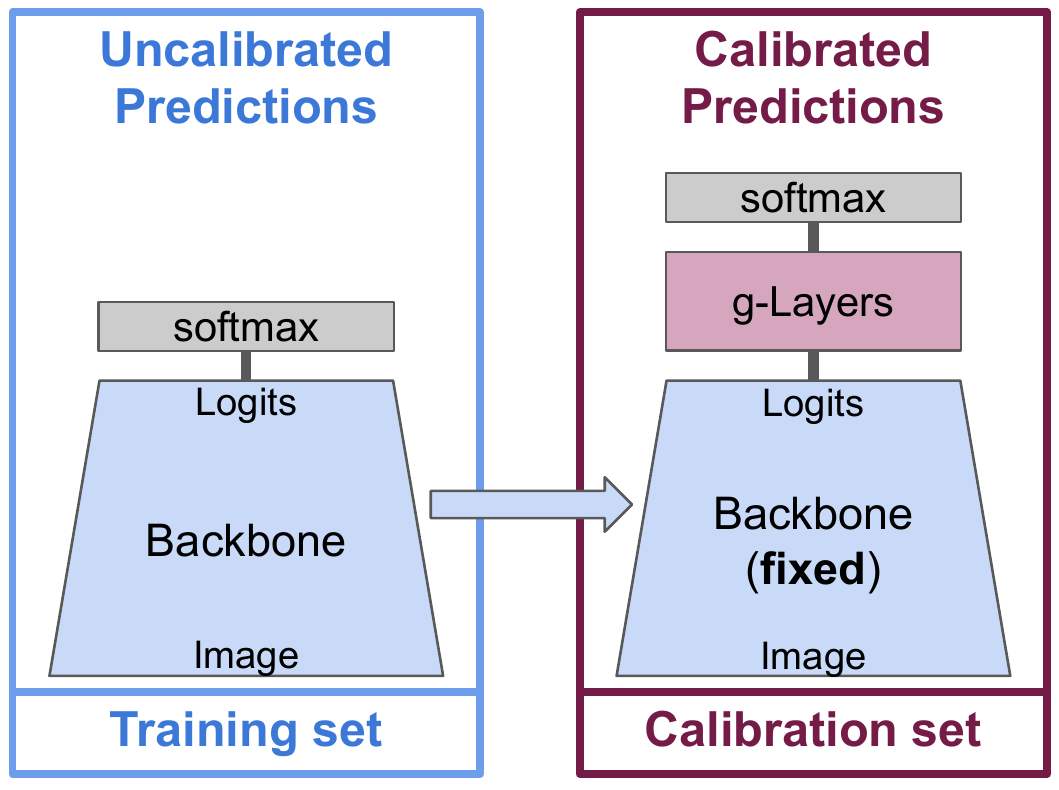}
   \caption{Illustration of $g$-layers: 
   (\emph{left}) a network $f$ is trained as usual;
   (\emph{right}) $g$-layers are trained on top of the (fixed) $f$ using an unseen calibration set. 
   The resulting network $g(f(x))$ is both \emph{theoretically} as well as \emph{empirical} calibrated.
   }
    \label{fig:network}
\end{figure}

\begin{figure*}
    \begin{subfigure}{0.23\textwidth}
        \includegraphics[width=\textwidth]{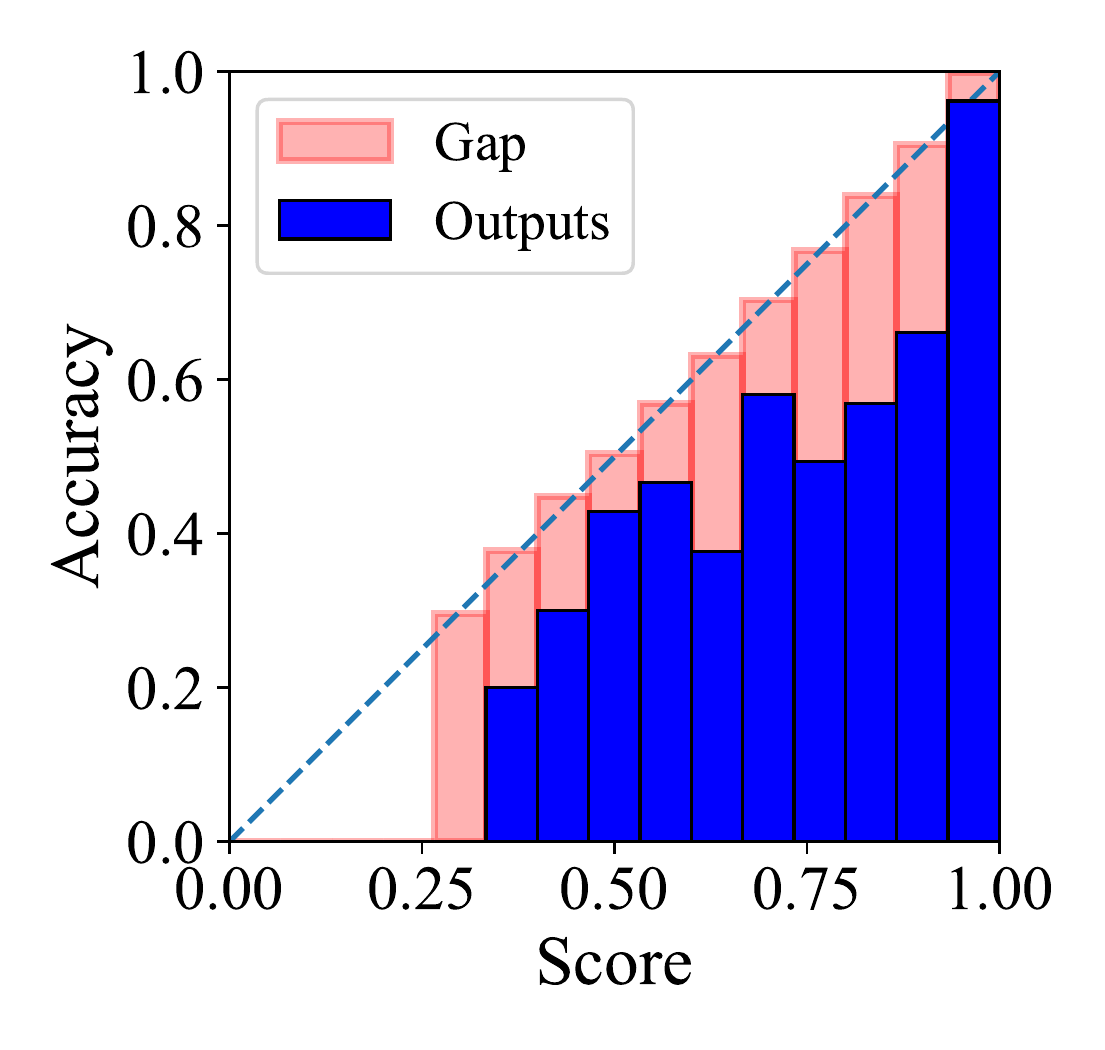}
        \vspace{-7mm}
        \caption{}
    \end{subfigure}    
    \begin{subfigure}{0.23\textwidth}
        \includegraphics[width=\textwidth]{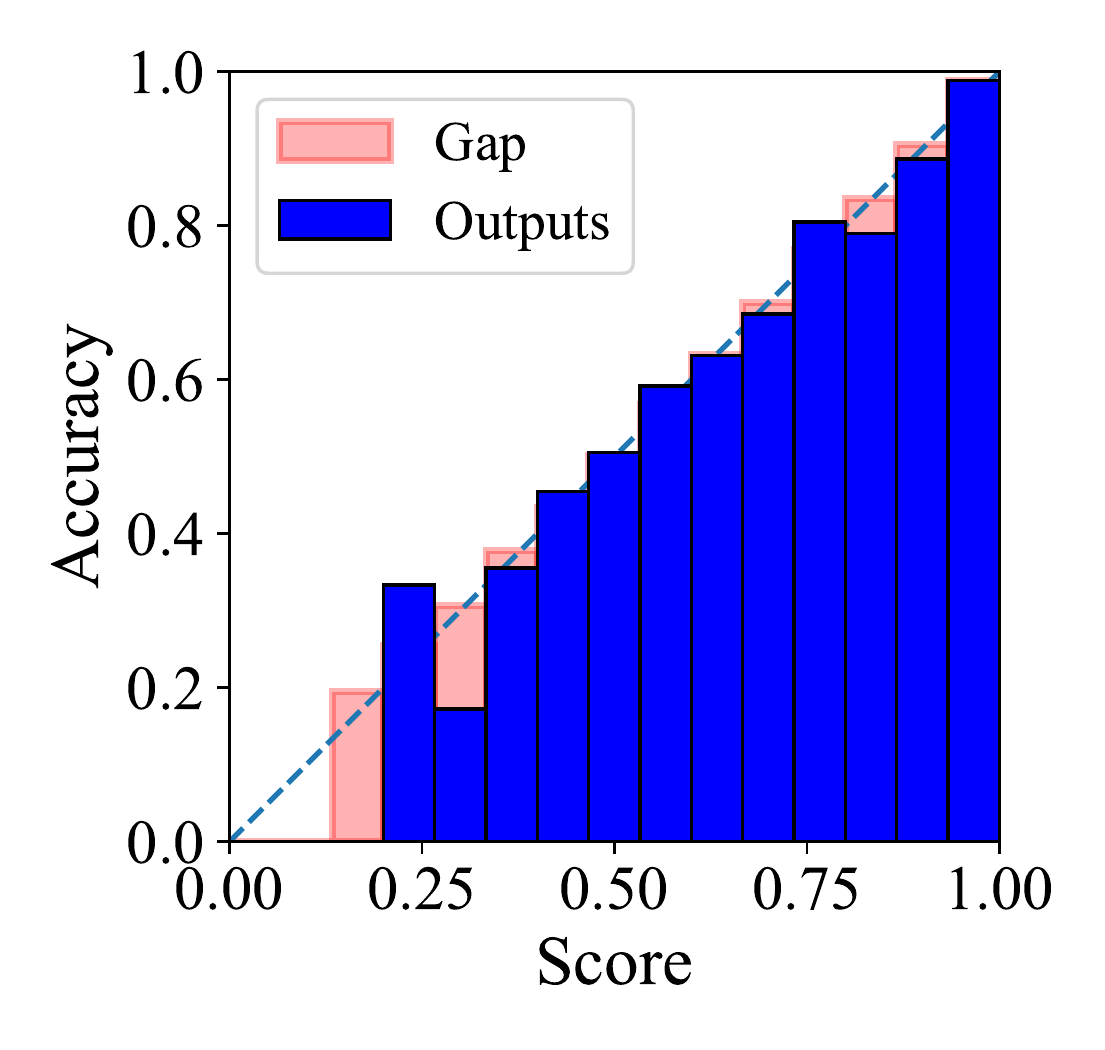}
        \vspace{-7mm}
        \caption{}
    \end{subfigure}        
    \begin{subfigure}{0.25\textwidth}
    
        \includegraphics[width=\textwidth]{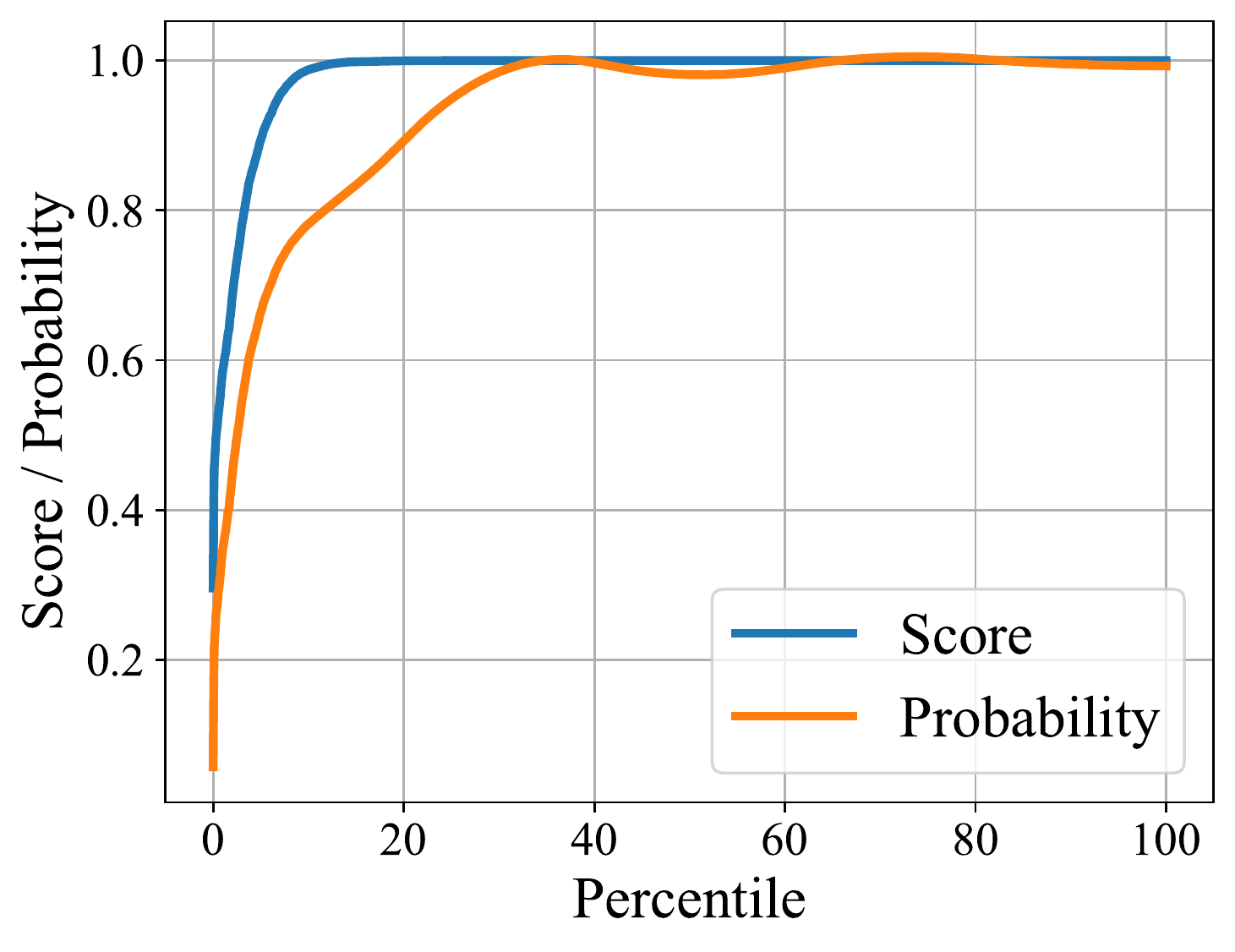}
        \caption{}
    \end{subfigure}
    \begin{subfigure}{0.25\textwidth}
        \includegraphics[width=\textwidth]{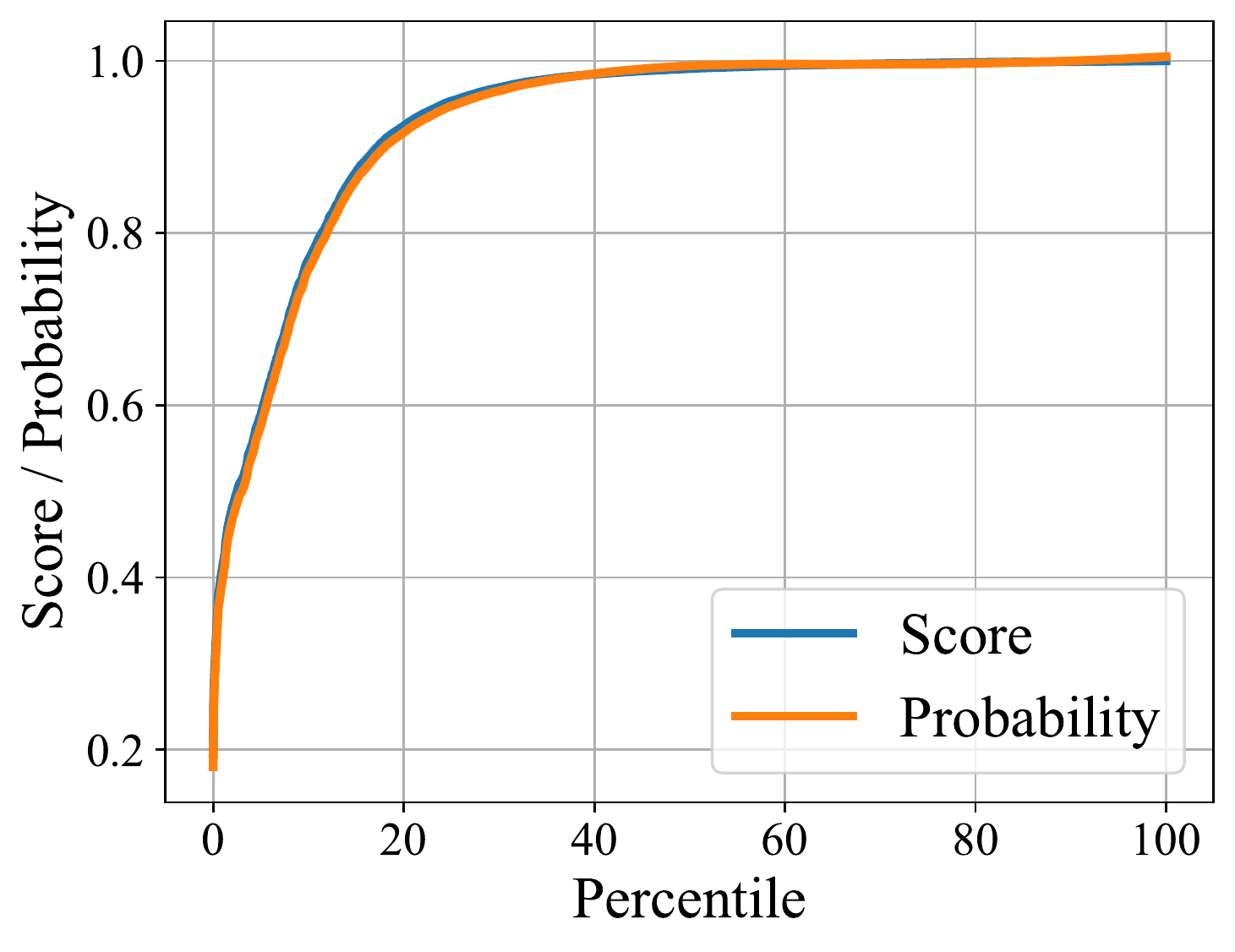}
        \caption{}
    \end{subfigure}    
\vspace{-1ex}
\caption{
Illustration of ECE and KS calibration metrics for ResNet-110 \cite{huang2017densely} model trained on CIFAR-10 for the \emph{top class}.
The uncalibrated network obtains a KS error of 4.8\%, and top-$1$ accuracy of 93.6\% on the unseen test set, the calibrated network (with two $g$-layers with 32 units each) has an KS error of 0.9\%. Reliability diagrams are shown in \textbf{a} and \textbf{b}, while \textbf{c}) and (\textbf{d} show score and probability plotted against fractile for uncalibrated and calibrated networks. If the network is perfectly calibrated, the scores and probability plot will coincide with each other as can be observed in (\textbf{d}). 
}
\label{fig:resnet110_RD_KS}
\end{figure*}


A well-known condition (\cite{bishop1994mixture}) for a classifier to be 
calibrated is that it minimizes the {\em cross-entropy} cost function, over all functions
$f: \domx \rightarrow \Delta^{n-1}$, where $\Delta^{n-1}$ is the standard
probability simplex. 
If the absolute minimum is attained, it is true that
$f_y(x) = P(y ~|~ x)$.  However, this condition is rarely satisfied, since
$\domx$ may be a very large space (for instance a set of images, of very high
dimension) and the task of finding the absolute (or even a local) minimum of 
the loss is difficult: 
it requires the network to have
sufficient capacity, and also that the network manages to find the optimal value
through training. To fulfil this requirement, two networks that reach different
minima of the loss function cannot both be calibrated. 
However, the requirement that a network is calibrated could be separated from that of finding the optimal classifier.  

In this paper, it is shown that a far less stringent condition is sufficient for
the network to be calibrated; we say that the network is {\em optimal with respect to calibration} provided no adjustment of the output of the network {\em in the output space}
can improve the calibration (see \defn{calibration-optimal}). This 
is a far simpler problem, since it requires that a function between far smaller-dimensional
spaces should be optimal.  

We achieve optimality with respect to calibration by addition of extra layers at the end of the network and post-hoc training on a hold-out calibration set to minimize the \emph{cross-entropy} cost function, see \fig{network}.
The extra layers (which we call $g$-layers) take as input the logits (\ie the output before applying the softmax of the original network) and outputs probabilities (\ie with a softmax as final activation).
Since the output space of the network is of small dimension (compared to the input of the whole network), optimization of the loss by training the $g$-layers is a far easier task.


We conduct experiments on various image classification datasets by learning a small
fully-connected network for the $g$-layers on a hold-out calibration set and evaluate on an unseen test set.
Our experiments confirm the theory that if the calibration set and the test set are
statistically similar, our method outperforms existing post-hoc calibration methods while retaining the original accuracy.

\section{Preliminaries}
We consider a pair of joint random variables, $(X, Y)$.
Random variable $X$ should take values in some domain $\domx$
for instance a set of images, and
$Y$ takes values in a finite set of classes 
$\calY = \{1, 2, \ldots, n\}$.
The variable $n$ will refer always to the number of labels, and $y$ denotes an element of the class set.

We shall be concerned with a (measurable) function $f:\domx \rightarrow \dom_Z = \R^m$,
and random variable $Z$ defined by $Z = f(X) = (f_1(X), f_2(X), \ldots, f_m(X))$. Note that $\domz$ is the same
as $\R^m$, but we shall usually use the notation $\domz$ to remind us that
it is the range of function $f$.
 The distribution of the random
variable $X$ induces the distribution for the random variable $Z = f(X)$.
The symbol $z$ will always represent $f(x)$ where $x$ is a value
of random variable $X$.
The notation $x \sim X$ means that $x$ is a value sampled from
the random variable $X$.  The situation we have in mind is that $f$ is the function
implemented by a (convolutional) neural network.
A notation $P(\cdot)$ (with upper-case $P$) always refers to probability,
whereas a lower case $p$ represents a probability distribution.
We use the notation $P(y ~|~ z)$ for brevity to mean
$P(Y=y ~|~ Z=z)$.

A common way of doing classification, given $n$ classes,
is that the neural net is terminated with a layer represented
by a function $q: \R^m \rightarrow \R^n$ (where typically $m = n$, but this is not required),
taking value $q(z)  = (q_1(z), \ldots, q_n(z))$ in $\R^n$, and satisfying 
$q_i(z) >   0$ and
$\sum_{i=1}^n q_i(z) = 1$.
The set of such vectors $q(z)$ satisfying these conditions
is called the {\em standard probability simplex}, $\Delta^{n-1}$,
or simply the standard (open) simplex.  This is an $n-1$ dimensional
subset of $\R^n$.
An example of such a function $q$ is the softmax function defined by
$
q_i(z) = \exp(z_i)/\sum_{j=1}^n \exp(z_j)
$.
%

Thus, the function implemented by a neural net is $q \circ f$, 
where $f:\domx \rightarrow \domz = \R^m$,
and ${q: \R^m \rightarrow \Delta^{n-1}}$.  The function $q$ will be
called the {\em activation} in this paper.
A function such as $q\circ f: \domx \rightarrow \Delta^{n-1}$ will
be called a {\em network}.
The notation $q\circ f$ represents the composition of the two functions
$f$ and $q$.
One is tempted to declare (or hope) that $q_y(z) = P(y ~|~ z)$, 
in other words
that the neural network outputs the correct conditional 
class probabilities given the network output.  
At least it is assumed that the most probable 
class assignment is equal to $\argmax_{y\in\calY} q_y(z)$.
It will be investigated how
justified these assumptions are.  Clearly, since $f$ can be
any function, this is not going to be true in general.

\paragraph{Loss.}
%
%
%
When using the negative log-likelihood (or cross-entropy) loss,
the expected loss over the distribution given by
the random variables $(X, Y)$ is
\begin{align}
L(q\circ f, X, Y) &= E_{(x, y)\sim(X,Y)} L(q\circ f, x, y) \nonumber\\
&= -E_{(x, y)\sim(X,Y)} \log(q_y(f(x)))
~.
\label{eq:LfXK}
\end{align}
%


We cannot know the complete distribution
of the random variables $(X, Y)$ in a real situation, however, if the distributions are represented
by data pairs 
$
\calD = \{(x_i, y_i)_{ i=1}^N\} 
$
sampled from the distribution
of $(X, Y)$, then the expected loss is approximated by
the empirical loss
\begin{align}
L(q\circ f, \calD) &\approx -E_{(x, y)\sim\calD} \log(q_y(f(x))\nonumber\\
            &= -\sum_{i=1}^N \log(q_{y_i}(f(x_i)) ~.\label{eq:nll}
\end{align}
%

The training process of the neural network is intended to find the function  $f^*$ that minimizes
the loss in \eq{nll}, given a particular network architecture.  Thus
$ 
{f^* = \argmin_{f:\domx\rightarrow \R^m} L(q \circ f, \calD)} ~.
$ 
%


\section{Calibration}\label{sec:calib}

%

According to theory (see \cite{bishop1994mixture}), 
if a network is trained to minimize the negative log-likelihood over all possible functions, \ie:
\begin{equation}
    f^* = \argmin_{f:\domx\rightarrow \Delta^{n-1}}  -E_{(x, y)\sim(X,Y)} \log(f_y(x))
\end{equation}
then the network (function $f^*$) is calibrated, in the sense that
$f_y^*(x)= P(y ~|~ x)$, as stated in the following theorem.

\begin{theorem}
\label{thm:calibration-theorem-0}
Consider joint random variables $(X, Y)$, taking values
in $\domx$ and $\calY$ respectively, where $\domx$ is some Cartesian space.
Let $f: \domx \rightarrow \Delta^{n-1}$ be a function.
Define the loss
$
L(f,  X, Y) = -E_{(x, y)\sim(X, Y)} \log \big(f_y(x)\big)
$.
If 
$
f = \argmin_{\hat f:\domx \rightarrow \Delta^{n-1}} L(\hat f, X, Y)
$
then
$
P(y ~|~ x) = f_y (x) ~.
$
\end{theorem}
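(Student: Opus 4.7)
The plan is to reduce the problem, as is standard for such cross-entropy minimization statements, to a pointwise minimization over the simplex, and then invoke the non-negativity of the Kullback--Leibler divergence.

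First, I would rewrite the loss by conditioning on $X$. Using the tower property,
\begin{align*}
L(f, X, Y) &= -E_{(x,y)\sim(X,Y)} \log f_y(x) \\
&= -E_{x\sim X}\, E_{y \sim Y \mid X=x} \log f_y(x) \\
&= -E_{x\sim X} \sum_{y\in\calY} P(y ~|~ x)\, \log f_y(x).
\end{align*}
The key observation is that since $f$ ranges over \emph{all} functions $\domx \to \Delta^{n-1}$, the value $f(x) \in \Delta^{n-1}$ may be chosen independently for each $x$. Therefore minimizing $L(f, X, Y)$ is equivalent to minimizing, for each fixed $x$, the inner quantity $-\sum_{y} P(y ~|~ x) \log f_y(x)$ over $f(x) \in \Delta^{n-1}$.

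Second, I would solve this pointwise problem. Fix $x$ and abbreviate $p_y = P(y ~|~ x)$ and $q_y = f_y(x)$, with $q \in \Delta^{n-1}$. Adding and subtracting $\sum_y p_y \log p_y$, one obtains
\begin{equation*}
-\sum_y p_y \log q_y = H(p) + \sum_y p_y \log \frac{p_y}{q_y} = H(p) + \mathrm{KL}(p \,\|\, q),
\end{equation*}
where $H(p)$ does not depend on $q$. Since $\mathrm{KL}(p \,\|\, q) \geq 0$ by Gibbs' inequality, with equality if and only if $q = p$, the unique minimizer is $q_y = p_y$, i.e.\ $f_y(x) = P(y ~|~ x)$.

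Third, I would combine the two steps: because the pointwise optimum is attained by the choice $f_y(x) = P(y ~|~ x)$, and this assignment defines a legitimate (measurable) function $\domx \to \Delta^{n-1}$, it is also the global minimizer of $L(f, X, Y)$. Hence any minimizer $f$ must satisfy $f_y(x) = P(y ~|~ x)$ (almost surely with respect to the distribution of $X$, which is all that is meaningful given the expectation).

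The main obstacle is essentially notational rather than mathematical: one must argue carefully that optimizing over arbitrary functions legitimately reduces to a pointwise optimization, and that uniqueness of the minimizer holds only up to sets of $X$-measure zero. Beyond that, the result is a direct consequence of Gibbs' inequality and requires no further machinery.
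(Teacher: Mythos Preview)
Your proof is correct, but it takes a genuinely different route from the paper. You argue directly: condition on $x$, reduce to a pointwise minimization over $\Delta^{n-1}$, and invoke Gibbs' inequality to identify the unique minimizer as the conditional distribution $P(\cdot \mid x)$. This is the standard information-theoretic argument and needs nothing beyond the tower property and nonnegativity of KL divergence.

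The paper instead proves a strictly stronger intermediate result first (their \thm{calibration-theorem}): if merely composing with \emph{any} function $g:\R^m\to\R^m$ before a fixed submersion $q$ cannot decrease the loss, then $q_y(z)=P(y\mid z)$. The proof there proceeds via the Euler--Lagrange equations, obtaining $\sum_y \frac{p(z,y)}{q_y(z)}\,\partial q_y/\partial z_j = 0$, and then uses the submersion property of $q$ to conclude that $p(z,y)/q_y(z)$ is constant in $y$. The statement you were asked to prove is then read off as a corollary: if $f$ is optimal over all functions, it is in particular optimal over all $f\circ g$. Your approach is shorter and more elementary for this specific statement; the paper's detour is motivated by the fact that the intermediate theorem, with its weaker ``optimal with respect to composition'' hypothesis, is exactly what is needed later to justify post-hoc calibration via $g$-layers (\thm{optimum-g}).
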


This theorem is a fundamental result, but it
leaves the following difficulties.
Even if the network is trained to completion, or trained with early-stopping,
there is no expectation that the loss will be exactly minimized
over all possible functions $f:\domx\rightarrow \Delta^{n-1}$. If this were
always the case, then research into different network architectures would
be largely superfluous.  

\SKIP{

\thm{calibration-theorem-0} is a simple corollary of the following slight generalization, stated in terms of function $q$ rather than $f$ for convenience later. 

\begin{theorem}
\label{thm:calibration-theorem-1}
Consider joint random variables $(Z, Y)$, taking values
in $\R^m$ and $\calY$ respectively.
Let $q: \R^m \rightarrow \Delta^{n-1}$ be a submersion%
\footnote{
A function between two differential manifolds $q: \mathcal{M} \rightarrow \mathcal{N}$ is called
a submersion if its differential map at point $z\in \mathcal{M}$, namely
$
Dq_z : T_z \mathcal{M}\rightarrow T_{q(z)} \mathcal{N} 
$ 
 has rank equal to the dimension of $\mathcal{N}$.
 In other words, the function is rank preserving.
 \textbf{Note}: The submersion requirement is a constraint over the simplex requirement used in the preliminaries. 
 However the softmax function remains a valid option, since it is a submersion.
 }
.
Define the loss
$
L(q,  Z, Y) = -E_{(z, y)\sim(Z, Y)} \log \big(q_y(z)\big)
$.
If 
\begin{equation}
    \text{\rm id} = \argmin_{g:\R^m \rightarrow \R^m} L(q\circ g, Z, Y),
\end{equation}
where $\text{\rm id}: \R^m\rightarrow \R^m$ is the identity function.
Then
$
P(y ~|~ z) = q_y(z)$.
\end{theorem}

The theorem is proved in the supplementary material. 
This theorem weakens the condition for optimality 
of $f$ (or $q$) by the weaker requirement that it be optimal
with respect to the prepended modification by $g: \R^m \rightarrow \R^m$.
The theorem in this form will be used to derive our main theorem.

We use the insight of this generalisation to define a less
ambitious minimum which does not require the whole network to be optimal,
as follows.

\begin{definition}
\label{def:calibration-optimal}
A function $f: \domx\rightarrow \R^m$ is said to be 
{\em optimal with respect to
calibration} for a loss-function $L(\cdot, X, Y)$ and activation $q$
if 
\begin{equation}
    \argmin_{g:\R^m\rightarrow\R^m} L(q\circ g \circ f, X, Y) = {\rm id}~,
\end{equation}
where $\text{id}: \R^m\rightarrow \R^m$ is the identity function.
\end{definition}

In other words $f$ is optimal with respect to calibration if 
replacing $f$ by the \emph{composite} function $g\circ f$
does not result in a decrease of the negative log-likelihood loss. 
 
Now, we can state and prove our theorem on calibration.
\begin{theorem}
\label{thm:calibration-theorem-main}
Consider joint random variables $(X, Y)$, taking values
in $\domx$ and $\calY$ respectively.
Let $f:\domx \rightarrow \R^m$,
and let $q: \R^m \rightarrow \Delta^{n-1}$ be a submersion.
Define the loss
\begin{equation}
L(q\circ f,  X, Y) = -E_{(x, y)\sim(X,Y)} \log\big(q_y(f(x))\big)~.
\end{equation}
If $f$ is optimal with
respect to calibration, for this cost function,
then $P(y ~|~ f(x)) =  q_y(f(x))$.
\end{theorem}

\begin{proof}
Let $Z = f(X)$, and for any $x$ define $f(x) = z$.  Then, 
\begin{align}
L(q\circ f,  X, Y) &= -E_{(x, y)\sim(X,Y)} \log\big(q_y(f(x))\big)  \nonumber\\ 
&= -E_{(z, y)\sim(Z,Y)} \log\big(q_y(z)\big)  \nonumber\\
&= L(q, Z, Y).
\end{align}
According to \thm{calibration-theorem-1}, if replacing
$q$ by $q\circ g$ will not decrease the value of the loss function
(which is the definition that $q$ is optimal with respect to recalibration),
we may conclude that 
$
P(y ~|~ z) = q_y (z) 
$
as required.
\qed
\end{proof}

\thm{calibration-theorem-main} is given for the negative log-likelihood loss, however this theorem also holds for least-squares error as indicated in~\cite{bishop1994mixture} and 
various other cost functions generally known as proper losses~\cite{buja2005loss,reid2010composite}.

\paragraph{Calibrating partially trained networks. }
According to \thm{calibration-theorem-main}, there is no need for 
the classifier network $f_\theta$ to be optimized in order for it to be calibrated.
It is sufficient that the last layer of the network (before 
the {\em softmax} layer, represented by $q$) should be optimal.  Thus, it
is possible for the classifier to be calibrated even after early-stopping
or incomplete training.

A simple rewriting of \thm{calibration-theorem-main} is

} 

We show in this paper, however that this is not necessary -- a far
weaker condition is sufficient to ensure calibration.
Instead of the loss function being optimized
over all functions $f: \domx \rightarrow \Delta^{n-1}$,
it is sufficient that the optimization be carried out over functions
$g: \R^m \rightarrow \R^m$ placed just before the activation function $q$.  Since the dimension of $\domx$
is usually very much greater that the number of classes $m$, optimizing
over all functions $g:\R^m \rightarrow \R^m$ is a far simpler task.

\begin{theorem}
\label{thm:optimum-g}
Consider joint random variables $(X, Y)$, taking values
in $\domx$ and $\calY$ respectively.
Let $f:\domx \rightarrow \R^m$, and $Z = f(X)$.
Further, let $q: \R^m \rightarrow \Delta^{n-1}$ be a submersion.
If
\begin{align}
\label{eq:optimum-g}
         g &= \argmin_{\hat g:\R^m\rightarrow \R^m} -E_{(z, y)\sim(Z, Y)} \log \big(q_y(\hat g(z))\big)~,
\end{align}
then $P(y ~|~ g\circ f(x)) =  q_y(g\circ f(x))$.
\end{theorem}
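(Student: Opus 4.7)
The plan is to reduce the statement to Theorem~\ref{thm:calibration-theorem-0} applied on the space $\R^m$ with the variables $(Z,Y)$, using the submersion property of $q$ to show that optimizing over $g:\R^m\to\R^m$ is equivalent to optimizing over \emph{all} measurable functions $h:\R^m\to\Delta^{n-1}$, and then using a tower-property argument to pass from $P(y\mid z)$ to $P(y\mid g(z))$.

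First, I would rewrite the loss in terms of $(Z,Y)$. Since $Z = f(X)$, the expectation $-E_{(x,y)}\log q_y(\hat g(f(x))) = -E_{(z,y)\sim(Z,Y)}\log q_y(\hat g(z))$. Thus $g$ minimizes the NLL of the composite $q\circ\hat g:\R^m\to\Delta^{n-1}$ against the joint distribution of $(Z,Y)$. The key step is to argue that this restricted minimization is no weaker than the unrestricted one: for every measurable $h:\R^m\to\Delta^{n-1}$ there exists $\hat g:\R^m\to\R^m$ with $q\circ\hat g = h$. Because $q$ is a submersion whose image contains the (open) simplex $\Delta^{n-1}$ (as is the case for softmax), each fiber $q^{-1}(p)$ is nonempty for $p\in\Delta^{n-1}$; a measurable selection (or simply pointwise choice) then produces the required $\hat g$. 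Consequently, the minimum over $\hat g$ coincides with the minimum of $-E\log h_y(z)$ over all $h:\R^m\to\Delta^{n-1}$.

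Having established this, I apply Theorem~\ref{thm:calibration-theorem-0} with $\domx$ replaced by $\R^m$ and the random variable $X$ replaced by $Z$. It yields that the minimizing function satisfies $q_y(g(z)) = P(y\mid z)$ for (almost) every $z$ in the support of $Z$. Substituting $z=f(x)$ gives $q_y(g(f(x))) = P(y\mid f(x))$.

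The last step is to upgrade this identity to the statement $P(y\mid g\circ f(x)) = q_y(g\circ f(x))$, which conditions on the coarser $\sigma$-algebra generated by $g(Z)$ rather than $Z$. The observation is that $q_y(g(Z))$ is $g(Z)$-measurable, so by the tower property,
\begin{equation*}
P(y\mid g(z)) = E\bigl[\,P(y\mid Z)\,\bigm|\, g(Z)=g(z)\bigr] = E\bigl[q_y(g(Z))\bigm|g(Z)=g(z)\bigr] = q_y(g(z)).
\end{equation*}
Composing with $f$ gives the claim. I expect the main obstacle to be Step~2: rigorously justifying that the submersion property of $q$ together with a measurable selection suffices to identify the image of $g\mapsto q\circ g$ with all measurable $h:\R^m\to\Delta^{n-1}$; the tower step is routine once the first identity $q_y(g(z)) = P(y\mid z)$ is in hand.
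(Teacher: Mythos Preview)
Your proposal is correct but takes a genuinely different route from the paper.

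The paper does \emph{not} reduce to Theorem~\ref{thm:calibration-theorem-0} by arguing that $\{q\circ \hat g\}$ exhausts all maps into $\Delta^{n-1}$. Instead, it proves an intermediate result (Theorem~4 in the supplement) by a calculus-of-variations argument: assuming $\mathrm{id}$ minimizes $L(q\circ \hat g, Z, Y)$, the Euler--Lagrange equations give $\sum_y \frac{p(z,y)}{q_y(z)}\,\partial q_y/\partial z_j = 0$, and the submersion hypothesis is used only through the rank-$(n-1)$ Jacobian of $q$ (Proposition~3) to force $p(z,y)/q_y(z)$ to be constant in $y$. Theorem~\ref{thm:optimum-g} then follows by noting that if $g$ is the global minimizer over $\hat g$, then $g\circ f$ is optimal with respect to recalibration, so the Euler--Lagrange result applied with $W=g(Z)$ yields $P(y\mid g(z))=q_y(g(z))$ directly---no tower step is needed.

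Your route is more elementary (no variational calculus) and actually yields the stronger intermediate identity $P(y\mid z)=q_y(g(z))$ at the \emph{finer} conditioning level, from which the tower property recovers the stated conclusion. The trade-off is that your Step~2 requires $q$ to be \emph{surjective} onto $\Delta^{n-1}$, which is not implied by ``submersion'' alone (a submersion is an open map, not necessarily onto); you correctly flag this and note it holds for softmax. The paper's argument needs only the local differential condition. Both arguments are valid under their respective hypotheses; yours is cleaner when surjectivity is available, the paper's is sharper in matching the stated hypothesis.
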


The condition that $q$ is a submersion implies (by definition) that the differential
map $dq : T\R^3 \rightarrow T\Delta^{n-1}$ is a subjection.
This required condition of the activation function $q$ being a submersion is satisfied by most activation functions, including the standard softmax activation.

In broad overview, \thm{optimum-g} is proved by applying 
\thm{calibration-theorem-0} to the function $q \circ g$ applied
to $u = f(x)$
to show that if $q \circ g$ is optimal
with respect to the loss function, then 
$P(y ~|~ q \circ g(u)) = q_y (g(u))$, which is the
same as $P(y ~|~ q\circ g \circ f(x)) = q_y (g\circ f(x))$.
The condition that $q$ be a submersion then allows the condition
that $g \circ q$ is optimal to be ``pulled back'' to a condition
that $g$ is optimal, in the sense required.  The details and profs are provided in the
supplementary material.
This theorem leads to the following definition.

\begin{definition}
\label{def:calibration-optimal}
A network $g \circ f: \domx\rightarrow \R^m$ is said to be 
{\em optimal with respect to
calibration} for a loss-function $L(\cdot, X, Y)$ and activation $q : \R^m \rightarrow \Delta^{n-1}$
if \eq{optimum-g} is satisfied.
\end{definition}

Then \thm{optimum-g} may be paraphrased by saying that the network
$q \circ g \circ f$ (where $q$ is a submersion) is calibrated if $g \circ f$ is optimal with respect to calibration.

\thm{calibration-theorem-0} and \thm{optimum-g} are given for the negative log-likelihood loss, however this theorem also holds for least-squares error as indicated in~\cite{bishop1994mixture} and 
various other cost functions generally known as proper losses~\cite{buja2005loss,reid2010composite}.

\paragraph{Calibrating partially trained networks. }
According to \thm{optimum-g}, there is no need for 
the classifier network $f_\theta$ to be optimized in order for it to be calibrated.
It is sufficient that the last layer of the network (before 
the {\em softmax} layer, represented by $q$) should be optimal.  Thus, it
is possible for the classifier to be calibrated even after early-stopping
or incomplete training.
Our calibration strategy, presented in \sect{finding-calibrated-nn}, is based on this theorem.

\paragraph{Classwise and top-$r$ calibration. }
\thm{optimum-g} gives a condition for the network to
be calibrated in the sense called {\em multi-class calibration} 
in~\cite{kull2019beyond}.  Many other calibration methods~\cite{kumar2019verified,platt1999probabilistic,zadrozny2002transforming}
aim at {\em classwise calibration}.  It can be shown (see the proofs in the
supplementary material) that if a classifier is correctly multi-class calibrated, then it is classwise calibrated as well.
The converse does not hold.

Furthermore a multi-class calibrated network is also correctly calibrated for the top-$r$ prediction, or within-top-$r$ prediction, \ie the probability of the correct class being one of the top $r$ predictions equals to the sum of the top $r$ scores.

Details are as follows.  By definition (see also ~\cite{kull2019beyond}
a network $f$ is said to be multi-class calibrated if $P(y ~|~ z) = z_k$,
where $z = f(x)$ and $z_k = f_k(x)$, which is also our definition 
of calibration.  A network is said to be {\em classwise calibrated} if
for every $k$ there is a function $f_k: \domx \rightarrow [0,1]$
such that $P(y ~|~ z_k) = z_k$, so each class is calibrated separately.
There is no requirement that $\sum_k f_k(x) = 1$.  It is shown
in our supplementary material that multi-class calibration implies
classwise classification, in that the component
functions $f_k$ derived from the function $f:\domx \rightarrow \Delta^{n-1}$
are class-calibration functions, though the converse is not true. 
(This result is not entirely trivial, because in general
$P(y ~|~ z_k) \ne P(y ~|~ z)$.) 

One can also consider {\em top-$r$ classification}, or {\em within top-$r$
classification}, which allows one to determine the probability
that the ground-truth $y$ for a sample is the $r$-th highest-scoring
classes, or within the top $r$ highest scoring samples.  

In particular, if $z \in \Delta^{n-1}$, then we denote the
$r$-th highest component of the vector $z$ by $z^r$.  Note the use of the 
upper-index to represent the numerically $r$-th highest component,
whereas $z_k$ (lower index) is the $k$-th component of $z$.
Given random variables $Y$ and $Z = f(X)$, we can also define the 
event $e^{=r}$ to mean that the ground truth $y \sim Y$ of a sample
is equal to the numerically $r$-th top component of $z$.
Similarly, $e^{\le r}$ is defined to mean that the ground truth $y$
is among the top $r$ scoring classes.
We show (see the supplementary material) that if the network is 
multi-class calibrated, then
\begin{align}
    P(e^{=r} ~|~ z) &= z^r\\
    P(e^{\le r} ~|~ z) &= \sum_{k=1}^r z^r
\end{align}


\newcommand{\bst}[1]{\textbf{#1}}
\begin{table*}[t]
	\centering
	\resizebox{\textwidth}{!}{
	\begin{tabular}{ll|cccc|ccccc}
		\toprule
        Dataset                   & Base Network              & Uncalibrated& Temp. Scaling &   MS-ODIR &  Dir-ODIR  &         \multicolumn{5}{c}{\textbf{$g$-Layers}}\\
                                  &                           &             &             &             &             &           1 &           2 &           3 &           4 &           5 \\\midrule
		CIFAR-10                  & ResNet 110                &       4.751 &  \bst{0.917}&       0.988 &       1.076 &       0.924 &       0.990 &       0.954 &       1.116 &       1.066 \\
		                          & ResNet 110 SD             &       4.103 &       0.362 &       0.331 &       0.368 &       0.317 &       0.378 &       0.342 &       0.307 &  \bst{0.188}\\
		                          & Wide ResNet 32            &       4.476 &       0.296 &  \bst{0.284}&       0.313 &       0.320 &       0.296 &       0.351 &       0.337 &       0.420 \\
		                          & DensNet 40                &       5.493 &       0.900 &       0.897 &       0.969 &       0.911 &       1.026 &  \bst{0.669}&       1.679 &       1.377 \\\midrule
		SVHN                      & ResNet 152 SD             &       0.853 &  \bst{0.553}&       0.572 &       0.588 &       0.593 &       0.561 &       0.579 &       0.588 &       0.564 \\\midrule
		CIFAR-100                 & ResNet 110                &      18.481 &       1.489 &       2.541 &       2.335 &       1.359 &       1.618 &  \bst{0.526}&       1.254 &           - \\
		                          & ResNet 110 SD             &      15.833 &       0.748 &       2.158 &       1.901 &  \bst{0.589}&       1.165 &       0.848 &       0.875 &           - \\
		                          & Wide ResNet 32            &      18.784 &       1.130 &       2.821 &       2.000 &       0.831 &  \bst{0.757}&       1.900 &       0.857 &           - \\
		                          & DensNet 40                &      21.157 &       0.305 &       2.709 &       0.775 &       0.249 &  \bst{0.188}&       0.199 &       0.203 &           - \\\midrule
	ILSVRC'12                     & ResNet 152                &       6.544 &       0.792 &       5.355 &       4.400 &       0.776 &  \bst{0.755}&       0.849 &       0.757 &           - \\
		                          & DensNet 161               &       5.721 &  \bst{0.744}&       4.333 &       3.824 &       0.881 &       1.091 &       0.780 &       1.105 &           - \\\bottomrule
	\end{tabular}
	}
	\caption{%
	KS calibration error~\cite{gupta-spline-iclr21} (Top-1 in \%) comparisons against state-of-the-art post-hoc calibration methods on several image classification datasets, using various network architectures. We vary the number of dense $g$-layers in the range 1--5. Each hidden $g$-layer has a fixed number of units depending on the number of classes (32, 302, 3002 for 10, 100, 1000 classes respectively). The results show that $g$-layers can be trained using NLL effectively reducing the KS error.
	}
	\label{tab:KS_overview}
\end{table*}

\section{Finding a Calibrated Neural Network}
\label{sec:finding-calibrated-nn}
Based on \thm{optimum-g} we propose the following strategy to find a calibrated neural network, as also illustrated in \fig{network}.
Our strategy is to replace function $f$ by $g\circ f$, where $g$ minimizes
the loss function $L(q \circ g, Z, Y)$ in \eq{optimum-g}.  Then the function $g\circ f$ 
will be 
calibrated. 
We assume that both $f$ and $g$ are implemented by a (convolutional) neural net and proceed as follows:
\begin{enumerate}
\item Train the parameters $\theta$ of a convolutional neural network $f$ on the {\em training} set to obtain $f_\theta$.
\item Strip any softmax layer (or equivalent) from $f_\theta$.
\item Capture samples $(z, y) \sim (f_\theta(X), Y)$ from a {\em calibration} set, which should be different from the training set used to train $f_\theta$.
\item Train a neural network $g$ with parameters $\phi$ on the captured $\{z, y \}$ samples to minimize \eq{optimum-g}, providing $g_\phi$.  
\item The composite network $g_\phi \circ f_\theta$ is the \textbf{calibrated} network.
\end{enumerate}

According to \thm{optimum-g}, the output of the composite network $g \circ f$ will be 
calibrated, provided that the minimum is achieved when training $g_\phi$
and that the calibration dataset
accurately represents the distribution $Z = f(X)$.  

It is a far simpler task to train a network $g_\phi$ to minimize $L(q \circ g, Z, Y)$ than
it is to train $f_\theta$ to minimize $L(q \circ f, X, Y)$, since the dimension of the
data $Z \in \R^m$ is normally far smaller than the dimension of $\domx$.
In our experiments, we implement $g$ as a small multilayer perceptron (MLP)
consisting of up to a few dense layers, of dimension no greater than a small multiple
of $m$.  Training time for $g_\phi$ is usually less than a minute.

\paragraph{Initialization. }
Assuming that function $f_\theta$ has already been trained to minimize
the loss on the training set, when $g_\phi$ is trained we do not wish
to undo all the work that has been done by starting training $g_\phi$
from an arbitrary (random) point.  Therefore, we initialize the parameters
of $g_\phi$ so that initially it implements the identity function.
We refer to these layers as {\em transparent layers}.  
This is similar to the approach in \cite{chen2015net2net}.

An alternative could be to train $g_\phi \circ f_\theta$ on the training set first, followed
by a short period of training on the calibration set, keeping the parameters
$\theta$ fixed.  In this case, it is not necessary to initialize the $g$-layers
to be transparent. The experimental validation of this alternative, however, falls beyond the scope of the current paper.


\paragraph{Overfitting. }
We note that $f_\theta$ is often well calibrated on the {\em training} set, but usually poorly calibrated on the {\em test} set.
Similarly, when using a large $g$-layer network it is relatively easy to obtain very good calibration on the {\em calibration} set, however calibration as measured on the {\em test} set, although far better than the calibration of the original network $f_\theta$, is not always as good.  In other words, also the $g$-layers are prone to overfitting to the calibration set.

The phenomenon of overfitting to the calibration set has been observed by many authors as far back as \cite{platt1999probabilistic}.  
The lesson from this is that the set used for calibration of the $g$-layers should be relatively large.  
For the CIFAR-10 dataset, we used $45,000$ training samples and $5000$ calibration samples (standard practice in calibration literature), but a different split of the data may provide better calibration results.  
In addition, the number of parameters in the $g$-layers should be kept low to avoid the risk of overfitting, hence using more than a few layers for $g_\phi$ seems counter-productive.
In practice we (also) add weight decay as regularization.

\begin{table*}[t]
	\centering
	\resizebox{\textwidth}{!}{
	\begin{tabular}{ll|cccc|ccccc}
		\toprule
        Dataset                   & Base Network              & Uncalibrated & Temp. Scaling &  MS-ODIR &  Dir-ODIR &         \multicolumn{5}{c}{\textbf{$g$-Layers}}\\
                                  &                           &             &             &             &             &           1 &           2 &           3 &           4 &           5 \\\midrule
        CIFAR-10                  & ResNet 110                &       4.750 &       1.132 &       1.052 &       1.144 &       1.130 &  \bst{0.997}&       1.348 &       1.152 &       1.219 \\
		                          & ResNet 110 SD             &       4.113 &       0.555 &       0.599 &       0.739 &       0.807 &       0.809 &       0.629 &       0.674 &  \bst{0.503}\\
		                          & Wide ResNet 32            &       4.505 &       0.784 &       0.784 &       0.796 &  \bst{0.616}&       0.661 &       0.634 &       0.669 &       0.670 \\
		                          & DensNet 40                &       5.500 &       0.946 &       1.006 &       1.095 &       1.101 &       1.037 &  \bst{0.825}&       1.729 &       1.547 \\\midrule
		SVHN                      & ResNet 152 SD             &       0.862 &       0.607 &       0.616 &       0.590 &       0.638 &  \bst{0.565}&       0.589 &       0.604 &       0.648 \\\midrule
		CIFAR-100                 & ResNet 110                &      18.480 &       2.380 &       2.718 &       2.896 &       2.396 &       2.334 &  \bst{1.595}&       1.792 &           - \\		
		                          & ResNet 110 SD             &      15.861 &  \bst{1.214}&       2.203 &       2.047 &       1.219 &       1.405 &       1.423 &       1.298 &           - \\

		                          & Wide ResNet 32            &      18.784 &       1.472 &       2.821 &       1.991 &       1.277 &  \bst{1.096}&       2.199 &       1.743 &           - \\
		                          & DensNet 40                &      21.156 &       0.902 &       2.709 &       0.962 &       0.927 &       0.644 &       0.562 &  \bst{0.415}&           - \\\midrule
    ILSVRC'12                     & ResNet 152                &       6.543 &       2.077 &       5.353 &       4.491 &       2.025 &       2.051 &  \bst{1.994}&       2.063 &           - \\
		                          & DensNet 161               &       5.720 &       1.942 &       4.333 &       3.926 &       1.952 &       1.978 &  \bst{1.913}&       1.931 &           - \\\bottomrule
	\end{tabular}
	}
	\caption{%
	Expected calibration error (ECE)~\cite{naeini2015obtaining} (Top-1 in \%) comparisons against state-of-the-art post-hoc calibration methods on several image classification datasets using various network architectures. We vary the number of dense $g$-layers in the range 1--5. Each hidden $g$-layer has a fixed number of units depending on the number of classes (32, 302, 3002 for 10, 100, 1000 classes respectively). The results show that $g$-layers can be trained using NLL effectively reducing the ECE error.
	}
	\label{tab:ECE_overview}
\end{table*}

\section{Related Work}
Calibrating classification functions has been studied for the past few decades,
earlier in the context of support vector
machines~\cite{platt1999probabilistic,zadrozny2002transforming}  
and recently on neural networks~\cite{guo2017calibration}.
In this literature, it is typically preferred to calibrate an already trained 
classifier, denoted as {\em post-hoc calibration}, as it can be applied 
to any off-the-shelf classifier.
Over the past few years, many post-hoc calibration methods have been
developed such as temperature scaling~\cite{guo2017calibration} 
(as an adaptation of Platt scaling~\cite{platt1999probabilistic} 
for multi-class classification), Bayesian
binning~\cite{naeini2015obtaining}, beta calibration~\cite{kull2017beta} and its
extensions~\cite{kull2019beyond} to name a few.
These methods are learned on a hold-out calibration set and 
the main difference among them is the type of function learned and 
the heuristics used to avoid overfitting to the calibration set.
Specifically, temperature scaling learns a scalar parameter while vector or matrix 
scaling learns a linear transformation of the classifier
outputs~\cite{guo2017calibration}.
Later, additional regularization constraints such as penalizing 
off-diagonal terms~\cite{kull2019beyond} 
and order-preserving constraints~\cite{rahimi2020intra} 
are introduced to improve 
matrix scaling.
While several practical methods are developed in this regime, 
it was not clear previously whether learning a calibration function post-hoc would lead to calibration in the theoretical sense.
We precisely answer this question and provide a theoretical justification 
of these methods.
Even though, the proof is provided for negative-log loss, it is
 applicable to any proper
loss function~\cite{buja2005loss,reid2010composite}.

We would like to clarify that our theoretical result (similar to~\cite{bishop1994mixture})
is obtained under the assumption that the calibration set matches the true data distribution 
(or simply the test set distribution).
Similar to the assumption used to train most classifiers.
In this regard, there have been various techniques introduced, such as label smoothing~\cite{muller2019does} and data augmentation~\cite{mixup}, to avoid overfitting while training a classification (base) network. We believe those techniques are applicable in the calibration context as well.

\begin{figure*}
    \centering
    \begin{subfigure}[b]{0.31\textwidth}
        \includegraphics[height=30mm]{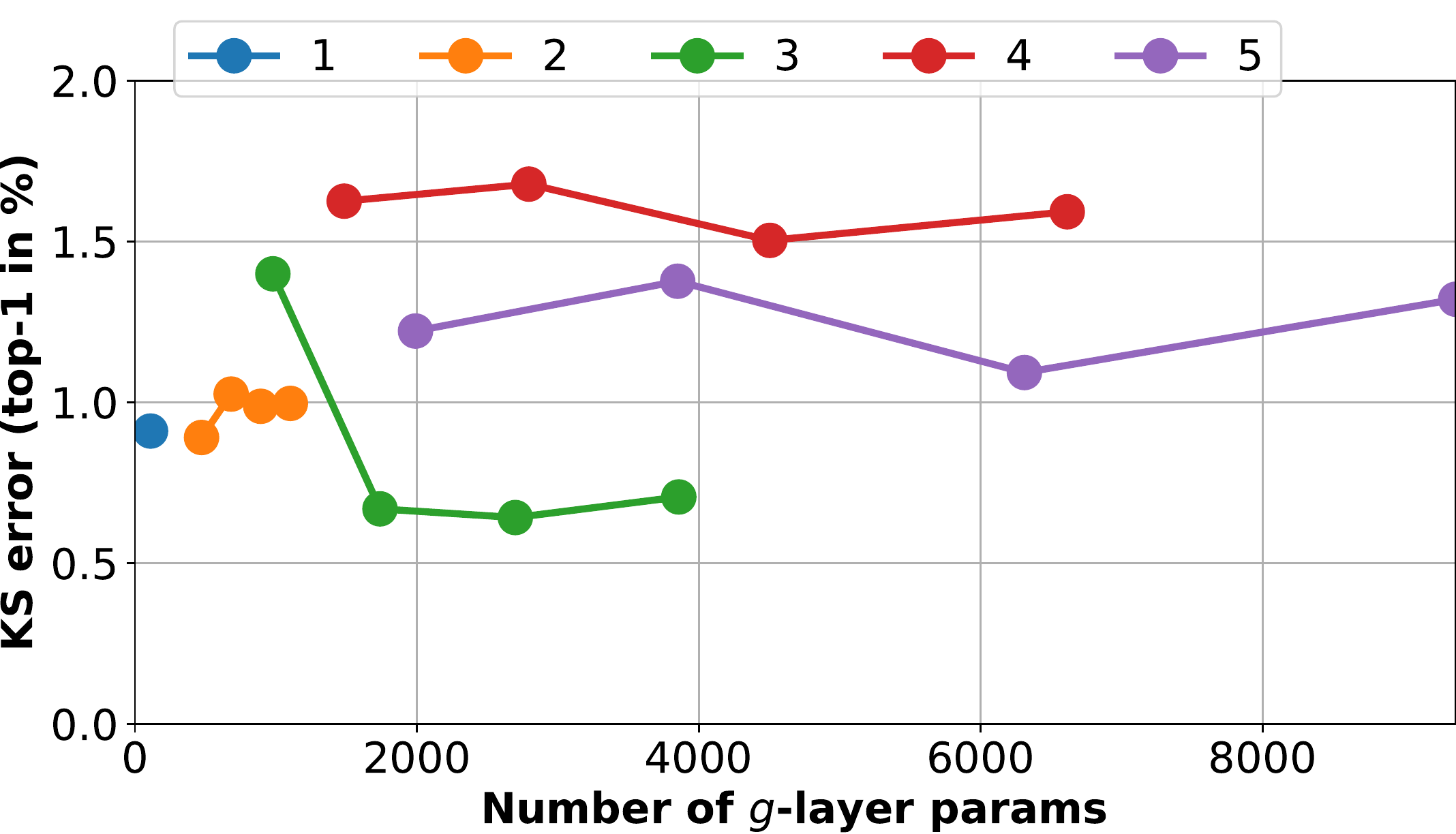}
        \caption{DenseNet 40 (CIFAR-10)}
    \end{subfigure}
    ~
    \begin{subfigure}[b]{0.31\textwidth}
        \includegraphics[height=30mm]{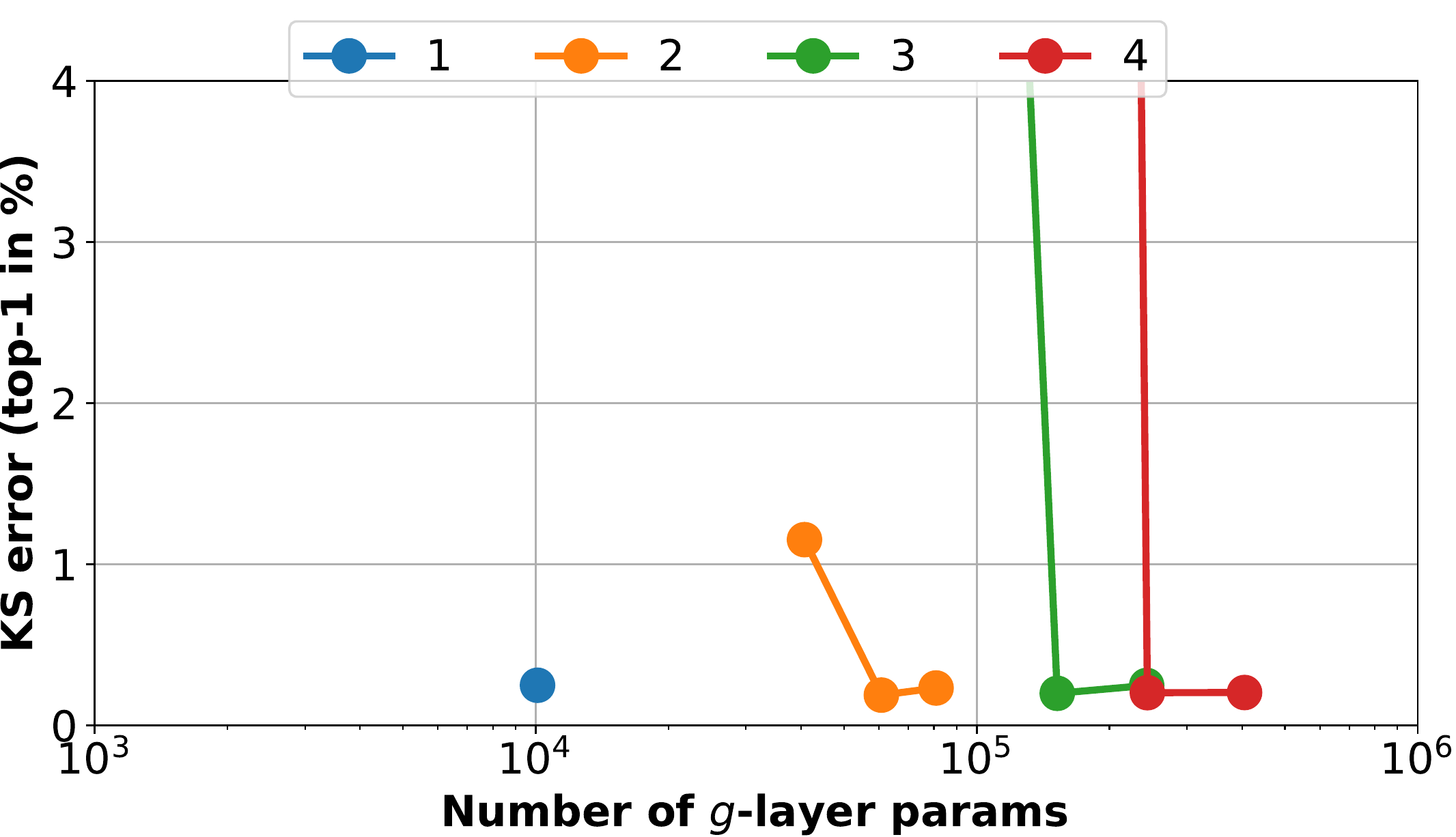}
        \caption{DenseNet 40 (CIFAR-100)}
    \end{subfigure}    
    ~
    \begin{subfigure}[b]{0.31\textwidth}
        \includegraphics[height=30mm]{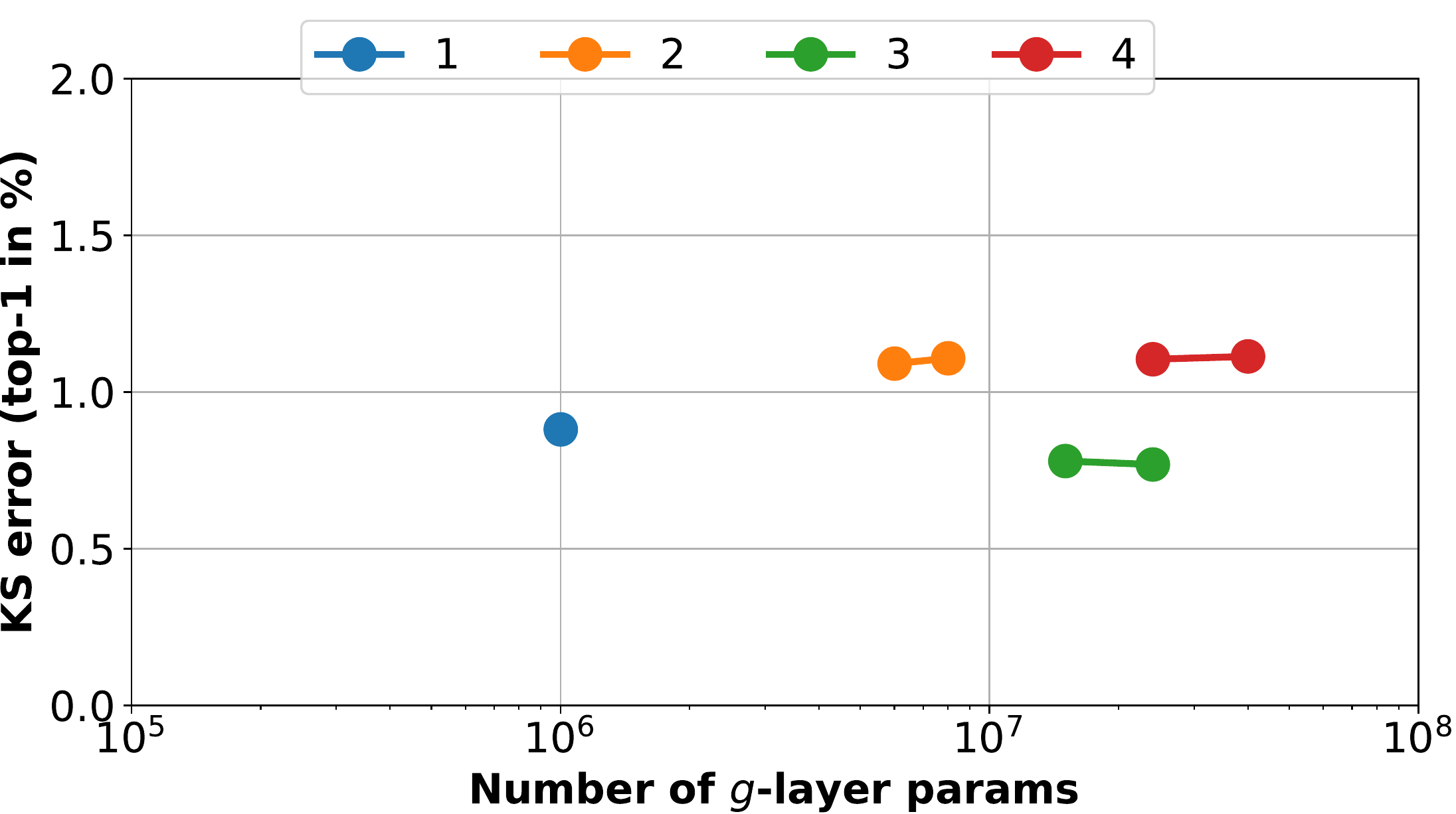}
        \caption{DenseNet 161 (ILSVRC'12)}
    \end{subfigure}
    \caption{
    Plots of KS error (top-1 in \%) as a function of the number of parameters in the $g$-layer networks. For the $g$-layers, the number of dense layers is varied (indicated by the different colors) and number of dense units per layer. Here we use $\{2, 3, 4, 5\}$ units per class for CIFAR-10, $\{2, 3, 4\}$ for CIFAR-100, and $\{3, 4\}$ for ILSVRC'12.
    }
    \label{fig:ndnu}
\end{figure*}
\newcolumntype{H}{>{\setbox0=\hbox\bgroup}c<{\egroup}@{}}
\begin{table}[t]
	\centering
	\resizebox{\columnwidth}{!}{
	\renewcommand{\bst}[1]{#1}
	\begin{tabular}{llcHHH|HHcHH}
		\toprule
                Dataset                   & Base Network              &         Uncalibrated & Temp. Scaling &          MS &         Dir &         \multicolumn{5}{c}{\textbf{$g$-Layers}}\\\midrule
		CIFAR-10                  & ResNet 110                &       1.066 &       0.275 &       0.277 &       0.283 &       0.283 &       0.285 &       0.283 &       0.291 &  \bst{0.269}\\                
		                          & ResNet 110 SD             &       0.918 &       0.118 &       0.113 &       0.114 &       0.117 &       0.113 &       0.108 &       0.105 &  \bst{0.088}\\
		                          & Wide ResNet 32            &       0.995 &       0.123 &       0.119 &       0.117 &  \bst{0.101}&       0.107 &       0.121 &       0.114 &       0.127 \\
		                          & DensNet 40                &       1.234 &       0.225 &       0.223 &       0.239 &       0.236 &       0.249 &  \bst{0.174}&       0.387 &       0.325 \\\midrule
		SVHN                      & ResNet 152 SD             &       0.188 &       0.134 &  \bst{0.133}&  \bst{0.133}&       0.142 &       0.134 &       0.138 &       0.137 &       0.134 \\\midrule
		CIFAR-100                 & ResNet 110                &       3.299 &       0.462 &       0.579 &       0.540 &       0.443 &       0.488 &  \bst{0.336}&       0.401 &           - \\		
		                          & ResNet 110 SD             &       2.904 &       0.241 &       0.434 &       0.407 &  \bst{0.210}&       0.277 &       0.234 &       0.234 &           - \\
		                          & Wide ResNet 32            &       3.459 &       0.348 &       0.574 &       0.445 &       0.304 &  \bst{0.299}&       0.462 &       0.338 &           - \\
		                          & DensNet 40                &       3.877 &       0.181 &       0.514 &       0.219 &       0.158 &  \bst{0.128}&       0.169 &       0.183 &           - \\\midrule
		ILSVRC'12                 & ResNet 152                &       1.155 &       0.329 &       0.942 &       0.839 &       0.329 &  \bst{0.326}&       0.329 &       0.333 &           - \\
		                          & DensNet 161               &       1.040 &       0.316 &       0.783 &       0.751 &       0.334 &       0.363 &  \bst{0.306}&       0.361 &           - \\\bottomrule
	\end{tabular}
	}
	\caption{
	Multi-class calibration as measured by the average Top-10 KS metric.
	Across all networks and datasets $g$-layers significantly improve the multi-class calibration.
	}
	\label{tab:ECE_top10}
\end{table}

\section{Experiments}

\subsection{Experimental setup}
For our experimental validation we calibrate deep convolutional neural networks trained on the CIFAR-10/CIFAR-100~\cite{krizhevsky2009learning}, SVHN~\cite{netzer2011reading} and ILSVRC'12~\cite{ILSVRC15} datasets. 
For the base network $f$ we use pre-trained models of different architectures: ResNet~\cite{he2016deep}, ResNet Stochastic Depth~\cite{huang2016eccv}, DenseNet~\cite{huang2017densely}, and Wide ResNet~\cite{zagoruyko2016wide}.
For most of the experiments we use the pre-trained models also used in~\cite{kull2019beyond}.

\paragraph{Initialization.}
The proposed $g$-layers are initialized with transparent layers so that at initialization they represent an identity mapping similar to the idea in~\cite{chen2015net2net}.
Preliminary results have shown that this transparent initialisation is necessary to train the $g$-layers from the relatively small calibration set. Especially for datasets with many classes the accuracy will drop significantly when normal random initialised weights are used.

\paragraph{Training.}
The $g$-layers are trained on a {\em calibration} set (not used for training base networks nor for evaluation). 
The hyper-parameters for learning (learning rate and weight decay) are determined using 5-fold cross validation on the calibration set.
The best results are used to train the $g$-layers on the full calibration set.
For all models (cross-validation and final calibration) early stopping is used based on the negative log-likelihood of the current training set.

\paragraph{Evaluation.}
To evaluate the $g$-layers, the calibration error is evaluated on the {\em test} set.
While our theory as well as our approach guarantees multi-class calibration (\cf \sect{calib}), in the calibration literature~\cite{guo2017calibration,kull2019beyond}, the standard practice is to measure the calibration of top-$1$ predictions (or generally classwise calibration).

We measure top-1 calibration error using Expected Calibration Error (ECE)~\cite{naeini2015obtaining} and Kolmogorov-Smirnov calibration error (KS-error)~\cite{gupta-spline-iclr21}.
While ECE is a widely used metric, a known weakness is its dependence on histograms (see \fig{resnet110_RD_KS}~(a) and (b)) which is deemed as a weakness since the final error depends on the chosen histogram binning scheme.
ECE might be particularly unsuitable on deep networks trained on small datasets such as CIFAR-10, since over $90$\% of scores are over $0.9$, and hence lie in a single bin (see \fig{resnet110_RD_KS} (c), which plots the scores versus fractile).

The KS-error~\cite{gupta-spline-iclr21} computes the maximum difference between the cumulative predicted distribution and the true cumulative distribution. 
If the network is consistently over- or under-calibrated, which is usually the case,
then the KS-error measures the (empirical) expected absolute 
difference between the score $q_i(g(f(x)))$ and the probability $P(i ~|~ q_i(g(f(x))))$, where $i = \argmax_{y \in \calY} q_y(g(f(x)))$. 
The cumulative distributions also provides visualizations similar to reliability diagrams, see \eg \fig{top_ks_error}.

\begin{figure*}
    \centering
    \begin{subfigure}[b]{\textwidth}
        \includegraphics[width=\textwidth,trim=0px 0px 450px 0px, clip]{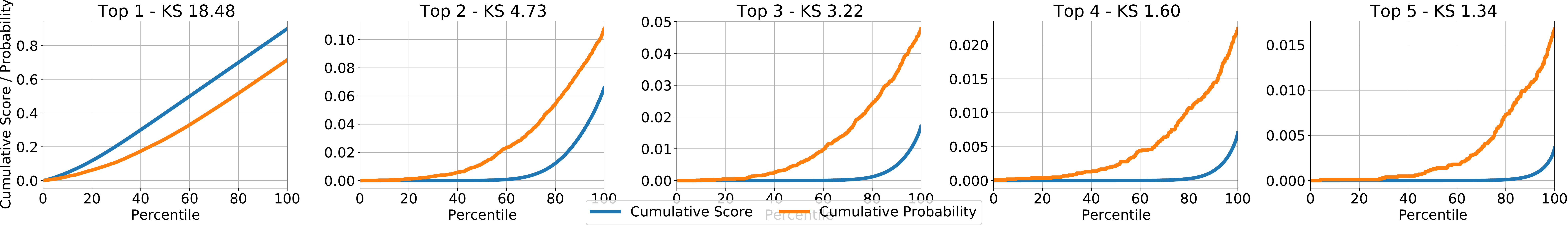}
        \caption{Uncalibrated}
    \end{subfigure}
    \begin{subfigure}[b]{\textwidth}
        \includegraphics[width=\textwidth,trim=0px 0px 450px 0px, clip]{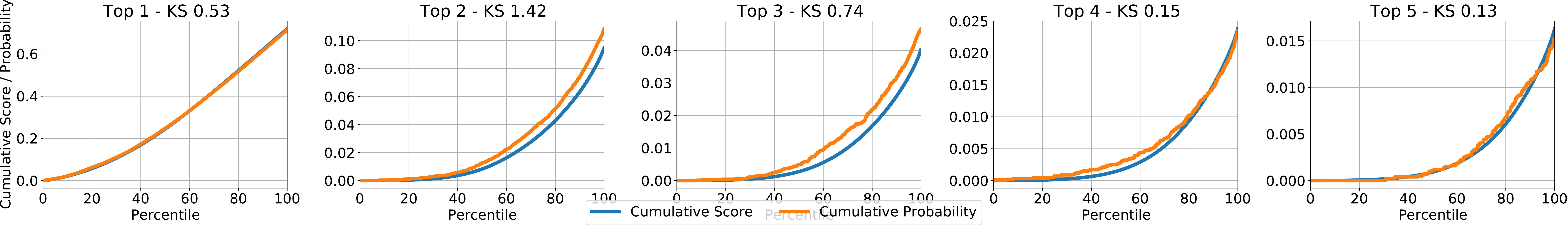}
        \caption{Calibrated $g$-layer (with 3 layers and 32 hidden units)}
    \end{subfigure}
    \caption{
    Comparison of the KS error of the Top-1 to Top-4 classes between the uncalibrated ResNet 110 model (\emph{top}) and the calibrated $g$-layer variant (\emph{bottom}), on CIFAR-100. The $g$-layer model significantly reduces the KS error for all Top-$k$ classes.
    }
    \label{fig:top_ks_error}
\end{figure*}
 
\subsection{Results}
We first provide an experimental comparison with other post-hoc calibration methods on the dataset used.
Then, we discuss in more depth the between the number of parameters in $g$-layers and overfitting with respect to calibration for a given dataset. 
In short, as predicted by our theory, if overfitting to the calibration set is reduced in practice, learning complete $g$-layers lead to superior calibration.
Nevertheless, heuristics for mitigating overfitting such as using larger calibration set and increased regularization (dropout, weight-decay, etc.) are relevant and the best approach to avoid overfitting with respect to calibration remains an open question.
 
\paragraph{Comparisons to other methods.}
In this set of experiments we compare $g$-layer calibration to several other calibration methods, including temperature scaling~\cite{guo2017calibration}, 
MS-ODIR~\cite{kull2019beyond}, and Dir-ODIR~\cite{kull2019beyond}. 
To provide the calibration results using the baseline methods, we use the base models and implementation of~\cite{kull2019beyond}. 
Since, we do not need to retrain the base models, we train $g$-layers on top of the pre-trained models.

We train $g$-layers with different number of dense layers, in the range from 1--5.
The size of the hidden $g$-layers is fixed to $H = 3 \times C + 2$, where C is the number of classes in the dataset. This satisfies the requirement for the transparent initialisation that $H > 2C$. In practice this means that the number of weights in $g$-layers scale cubic with the number of classes, \eg the 3-layer network for ILSVRC'12 contains 15M weights, while the 4-layer network has 24M.

The performance is measured using KS-Error, in~\tab{KS_overview}, and ECE, in~\tab{ECE_overview}. 
We observe that the proposed $g$-layers achieve at least comparable calibration performance to the current state-of-the-art methods, but often (significant) better. 
In general there is a negligible effect on the accuracy ($\pm0.5\%$) of the base network, see supplementary material for full results.

We would like to point out that all the compared methods belong to the post-hoc calibration category and can be thought of as special cases of our method (that is learning a $g$-function).
The main difference between these methods is the allowed function class while optimizing the $g$-layers, which can be thought of as a technique to avoid overfitting on a small calibration set.

For our \emph{dense} $g$-layers holds that the 5-fold cross validation seems to be able to find good learning hyper-parameters, mitigating overfitting when training (large) networks on a relatively small calibration set.
We conclude that we can train $g$-layers effectively for different network architectures, using a range of hidden layers for various datasets. 


\paragraph{Number of hidden units}
In this set of experiments we explore the number of hidden units used in the $g$-layer network and their relation to the calibration performance. 
For these experiments we train $g$-layers on the DenseNet models from CIFAR-10/100 and ILSVRC'12 (other networks/datasets are provided in the supplementary). 
The number of dense layers is varied, in the range 1 -- 5, and the number of hidden units is set to $H=h\times C + 2$.

The results are in \fig{ndnu}.
From these results we observe that in most cases the performance is stable across the number of hidden units and the number of layers.
This shows that dense $g$-layers can be trained effectively over large number of parameters, when initialised with transparent layers and learning settings found by cross validation. From these results we do not see clear signs of overfitting. 

The bad performance on the CIFAR-100 dataset, when using 2, 3 or 4 layers with $h=2$ can be explained by the failure to initialise correctly in a transparent manner. This is supported by the evaluation of the accuracy, where all other models obtain similar accuracy to the uncalibrated models, these two cases yield about random accuracy. 

\paragraph{Multi-class calibration}
Our theory shows that $g$-layers, trained with NLL, optimize multi-class calibration, so far we have only evaluated the top-1 calibration. 
In this final set of experiments we show that our method indeed performs multi-class calibration. 

In the first experiment, we use the ResNet 110 model on CIFAR-100 (other networks / datasets are provided in the supplementary) and compare the uncalibrated network with a $g$-layer network (3 layer, $h=3$).
In \fig{top_ks_error} we show the calibration of the top-1 to top-4 classes, by the KS plots and the KS error for each class.
From these results it is clear that (a) $g$-layers significantly reduce the KS-error for all top-$k$ classes; (b) that the top (few) classes have by far the most influence on multi-class calibration metrics, \eg multi-class ECE~\cite{kull2019beyond} takes the average over all classes.

In the second experiment we evaluate the average top-10 KS error for all datasets and network architectures used. 
We compare the uncalibrated network with a $g$-layer network (3 layers, $h=3$). 
The results are in \tab{ECE_top10}.
From the results we observe that the calibrated network has a significant lower error, than the uncalibrated network. Based on these results from both experiments we conclude that $g$-layers indeed perform multi-class calibration.

\section{Conclusion}
The analysis in this paper gives broader conditions than previously known for a 
classifier such as a neural network to be correctly calibrated, ensuring that
the network can be correctly calibrated during training, after early-stopping or 
through post-hoc calibration. 
This provides a theoretical basis for post-hoc calibration schemes.

In this paper we have also introduced $g$-layers, a post-hoc calibration network. 
It consists of a series of transparent dense prediction layers, concatenated to a pre-trained network.
These $g$-layers are optimised using negative log likelihood training.
Experimentally we have shown that $g$-layers obtain excellent calibration performance, both when evaluated for the Top-1 class as well as evaluated for multi-classes, across a large set of networks and datasets. 
We intend to study techniques to improve generalization with respect to calibration as a future work.
\appendices
\setcounter{table}{0}
\renewcommand{\thetable}{A.\arabic{table}}
\setcounter{figure}{0}
\renewcommand{\thefigure}{A.\arabic{figure}}

We first provide the proofs of the results of our main paper and then provide additional results.
\section{Proofs of post-hoc calibration}
 
\subsection{Lemma about change of variables}
The following result will be useful.  It is perhaps relatively obvious, but worth stating
exactly.

\begin{lemma}
\label{lem:expectation-transfer}
Let $X$ be a random variable with values in $\domx$ and $f:\domx \rightarrow \dom_Z$ be a 
measurable function.  Let $h: \dom_Z \rightarrow \R$ be a measurable function.
If $Z = f(X)$, then
\[
E_{x\sim X} \, h(f(x)) = 
E_{z\sim Z} \,h(z)~.
\]
\end{lemma}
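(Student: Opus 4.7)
The plan is to recognize this as a standard measure-theoretic change-of-variables identity, sometimes called the law of the unconscious statistician, and to prove it via the pushforward measure construction. Specifically, if $\mu_X$ denotes the distribution (law) of $X$ on $\domx$, then by definition the distribution $\mu_Z$ of $Z = f(X)$ is the pushforward measure $f_*\mu_X$, characterized by $\mu_Z(B) = \mu_X(f^{-1}(B))$ for every measurable $B \subseteq \dom_Z$. The lemma is then the assertion that
\[
\int_{\domx} h(f(x))\, d\mu_X(x) \;=\; \int_{\dom_Z} h(z)\, d\mu_Z(z).
\]

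First I would verify the identity for indicator functions $h = \mathbf{1}_B$ with $B \subseteq \dom_Z$ measurable: the left-hand side equals $\mu_X(f^{-1}(B))$, while the right-hand side equals $\mu_Z(B)$, and these agree by the very definition of the pushforward. Next I would extend to nonnegative simple functions $h = \sum_i c_i \mathbf{1}_{B_i}$ by linearity of the integral. Then I would extend to arbitrary nonnegative measurable $h$ by taking an increasing sequence of simple functions $h_n \uparrow h$ and invoking the monotone convergence theorem on both sides (noting $h_n \circ f \uparrow h \circ f$ on $\domx$). Finally I would handle general measurable $h$ by decomposing $h = h^+ - h^-$, provided at least one of $E[h^+(Z)]$, $E[h^-(Z)]$ is finite so that the difference is well defined.

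The only mild subtlety is the measurability check: since $f$ is measurable and $h$ is measurable, the composition $h \circ f$ is measurable on $\domx$, so the integral on the left side is well defined. There is no real obstacle here; the proof is essentially by construction of the pushforward measure and the standard machine (indicators $\to$ simple $\to$ nonnegative $\to$ signed). I would keep the write-up short and cite the standard measure-theoretic result, since the usefulness of the lemma in the paper is in allowing substitutions of the form $E_{x \sim X}\log q_y(f(x)) = E_{z \sim Z}\log q_y(z)$ in the subsequent proofs of \thm{calibration-theorem-0} and \thm{optimum-g}.
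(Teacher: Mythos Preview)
Your proposal is correct and takes essentially the same approach as the paper: both frame the result as the change-of-variables identity for the pushforward measure $\mu_Z = f_*\mu_X$. The paper's own proof is only a sketch that writes the two expectations as integrals against $\mu_X$ and $\mu_Z$ and asserts the equality ``follows from a change of variables'' (supplemented by an informal sampling argument), whereas you actually carry out the standard machine (indicators $\to$ simple $\to$ nonnegative $\to$ signed) that justifies that change of variables rigorously.
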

The proof follows from the definition of expected value, using a simple change of
variables.
\begin{proof} (sketch)
The expectations may be written as
\begin{align*}
 E_{x\sim X} \, h(f(x)) &= \int h(f(x))\, d{\mu_X} \\
 E_{z\sim Z}\, h(z) &= \int h(z) \,d{\mu_Z} ~,
 \end{align*}
 where $\mu_X$ and $\mu_Z$ are probability measures on $\domx$ and $\dom_Z$.
The desired equality then follows from a change of variables. \qed
\end{proof}
The lemma can be given a more informal but more intuitive proof as follows.
The expected value $E_{x\sim X} \, h(f(x))$ can be computed by
sampling $x$ from $X$ and taking the mean of the values $h(f(x))$.
In the limit as the number of samples increases, this mean converges to the
$E_{x\sim X} \, h(f(x))$.

Similarly, $E_{z\sim Z}\, h(z)$ is obtained by sampling
from $Z$ and computing the mean of the values $h(z)$.  However, random
samples from $Z$ are obtained by sampling $x \sim X$ for then $z = f(x)$
is a sample from the distribution $Z$.  Hence, the two expectations
give the same result.

\subsection{Submersions}
We are interested in submersions from $\R^m$ to $\Delta^{n-1}$,
the standard open simplex.  

%
\begin{proposition}
\label{pro:submersion}
Let 
$q: \R^m \rightarrow \Delta^{n-1}$ be a submersion.
If $~\sum_{y=1}^n w_y \, \partial q_y / \partial z_j = 0$,
then $w_y$ is a constant for all $y$.
\end{proposition}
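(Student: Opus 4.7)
The plan is to exploit the two facts that (i) the simplex $\Delta^{n-1}$ is an $(n-1)$-dimensional submanifold of $\R^n$ carved out by the linear constraint $\sum_y q_y = 1$, and (ii) $q$ being a submersion pins down the rank of the Jacobian as a matrix. Fix a point $z \in \R^m$ and let $M$ denote the $n \times m$ matrix with entries $M_{yj} = \partial q_y/\partial z_j$ evaluated at $z$. The hypothesis we are given can be rewritten as $w^{\top} M = 0$, so what we need to show is that the left null space of $M$ consists exactly of the constant vectors, i.e.\ is spanned by $(1,\ldots,1)$.

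First I would observe that $(1,\ldots,1)$ does lie in the left null space. This follows by differentiating the simplex constraint $\sum_{y=1}^n q_y(z) = 1$ with respect to $z_j$, which gives $\sum_y \partial q_y/\partial z_j = 0$ for every $j$. Equivalently, the columns of $M$ all sit inside the hyperplane $H = \{v \in \R^n : \sum_y v_y = 0\}$, which is exactly the tangent space $T_{q(z)}\Delta^{n-1}$ viewed as a subspace of $T_{q(z)}\R^n \cong \R^n$.

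Next I would use the submersion hypothesis to control the rank of $M$. By definition of submersion, the differential $Dq_z : T_z\R^m \to T_{q(z)}\Delta^{n-1}$ is surjective, so its image has dimension $n-1$. Under the identifications above, this image coincides with the column span of $M$ inside $H$, so $\operatorname{rank}(M) = n-1$. Consequently the left null space $\{w \in \R^n : w^{\top} M = 0\}$ has dimension $n - \operatorname{rank}(M) = 1$.

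Combining the two observations: the left null space is one-dimensional and already contains $(1,\ldots,1)$, so any $w$ satisfying $\sum_y w_y \, \partial q_y/\partial z_j = 0$ for all $j$ must be a scalar multiple of $(1,\ldots,1)$, giving $w_1 = w_2 = \cdots = w_n$ as required. The only subtle step is the rank count, which requires being careful that the codomain tangent space is $H$ (an $(n-1)$-dimensional subspace of $\R^n$) rather than all of $\R^n$; once that identification is made the rest is elementary linear algebra.
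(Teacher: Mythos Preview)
Your proposal is correct and follows essentially the same argument as the paper's proof: both differentiate the simplex constraint to place $(1,\ldots,1)$ in the left null space of the Jacobian, use the submersion hypothesis to conclude the Jacobian has rank $n-1$, and then invoke rank--nullity to see that the left null space is one-dimensional. Your version is a bit more explicit about the identification of $T_{q(z)}\Delta^{n-1}$ with the hyperplane $H$, which is a helpful clarification of the rank count, but the structure is identical.
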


\begin{proof}
Since $\Delta^{n-1}$ has dimension $n-1$,  if $q$
is a submersion,  the Jacobian
$\partial q_y(z) / \partial z_j$ has rank $n-1$.  
Since $\sum_{y=1}^n q_y(z) = 1$, taking derivatives gives
$\sum_{y=1}^n \partial q_y(z) / \partial z_j = 0$ for all $j$.
Written in terms of matrices, with $\m J = \partial q_y(z) / \partial z_j$
this says that $\v 1\tr \m J = 0$.  Further, since $\m J$ has rank $n-1$, 
if $\v w \tr \m J = 0$ then $\v w = \alpha \v 1$.
\end{proof}

\subsection{Negative-logarithm loss and calibration}

The following theorem is a known (in some form) property of the Negative-Logarithm
cost function.  The paper~\cite{bishop1994mixture} gives the essential idea of the proof,
but the theorem is not stated formally there.

\begin{theorem}
\label{thm:calibration-theorem}
Consider joint random variables $(Z, Y)$, taking values
in $\R^m$ and $\calY$ respectively.
Let $q: \R^m \rightarrow \Delta^{n-1}$ be a submersion.
Define the loss
\begin{align*}
L(q,  Z, Y) &= -E_{(z, y)\sim(Z, Y)} \log \big(q_y(z)\big) ~.
\end{align*}
If 
\[
\text{\rm id} = \argmin_{g:\R^m \rightarrow \R^m} L(q\circ g, Z, Y)
\]
then
\[
P(y ~|~ z) = q_y (z) ~.
\]
\end{theorem}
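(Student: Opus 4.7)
The plan is to use a variational (first-order optimality) argument. Since $g = \text{id}$ minimizes $L(q \circ g, Z, Y)$ over all measurable $g:\R^m \to \R^m$, any perturbation $g_\epsilon = \text{id} + \epsilon h$ for a smooth vector field $h:\R^m \to \R^m$ must satisfy $\frac{d}{d\epsilon}L(q\circ g_\epsilon, Z, Y)\big|_{\epsilon=0} = 0$. Expanding the loss using the conditional decomposition $E_{(z,y)\sim(Z,Y)} = E_{z\sim Z}\, E_{y\sim Y|Z=z}$, we can write
\begin{equation*}
L(q\circ g_\epsilon, Z, Y) = -E_{z\sim Z}\Big[\sum_{y=1}^n P(y\mid z)\,\log q_y(g_\epsilon(z))\Big].
\end{equation*}

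Differentiating inside the expectation (justified by dominated convergence, assuming standard smoothness of $q$ away from the boundary of $\Delta^{n-1}$) and evaluating at $\epsilon = 0$ gives
\begin{equation*}
0 = -E_{z\sim Z}\Big[\sum_{y=1}^n \frac{P(y\mid z)}{q_y(z)} \sum_{j=1}^m \frac{\partial q_y}{\partial z_j}(z)\, h_j(z)\Big].
\end{equation*}
Because this must hold for every choice of $h$, the standard fundamental-lemma-of-the-calculus-of-variations argument lets me conclude that for $\mu_Z$-almost every $z$, and for each coordinate $j$,
\begin{equation*}
\sum_{y=1}^n w_y(z)\, \frac{\partial q_y}{\partial z_j}(z) = 0, \qquad w_y(z) := \frac{P(y\mid z)}{q_y(z)}.
\end{equation*}

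At this point \pro{submersion} does the heavy lifting: since $q$ is a submersion, the coefficients $w_y(z)$ must be constant in $y$ for each fixed $z$, so there exists $\alpha(z)$ with $P(y\mid z) = \alpha(z)\, q_y(z)$. Summing over $y$ and using $\sum_y P(y\mid z) = 1 = \sum_y q_y(z)$ yields $\alpha(z) = 1$, hence $P(y\mid z) = q_y(z)$ as required.

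The main obstacle is the measure-theoretic rigor of the variational step: formally exchanging $\frac{d}{d\epsilon}$ with the expectation, and concluding a pointwise (almost everywhere) identity from the integral vanishing against arbitrary test fields $h$. One needs to restrict attention to bounded, compactly supported smooth $h$ (which are admissible perturbations), to know that $q$ stays inside the open simplex so that $\log q_y$ is differentiable, and to work only on the support of $\mu_Z$. The algebraic heart of the argument, namely passing from $\sum_y w_y \partial q_y/\partial z_j = 0$ to $w_y$ being constant in $y$, is immediate from \pro{submersion}, and once that is in hand the normalization step is trivial.
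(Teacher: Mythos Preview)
Your proposal is correct and follows essentially the same approach as the paper: both arguments use first-order optimality of $g=\text{id}$ (the paper phrases it via the Euler--Lagrange equation, you via a direct perturbation $g_\epsilon=\text{id}+\epsilon h$) to obtain the pointwise relation $\sum_y \frac{P(y\mid z)}{q_y(z)}\,\partial q_y/\partial z_j=0$, then invoke \pro{submersion} and normalize. The only cosmetic difference is that the paper works with the joint density $p(z,y)$ rather than the conditional $P(y\mid z)$, which differs from your $w_y(z)$ only by the common factor $p(z)$; you are also somewhat more explicit about the measure-theoretic caveats (dominated convergence, test-field support), which the paper glosses over.
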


\begin{proof}
The assumption in this theorem is that the value of the 
loss function cannot be reduced by applying some function $g:\R^m \rightarrow \R^m$.
We investigate what happens
to the function $L(q, X, Y)$ when $q$ is replaced by the composition $q \circ g$, 
where $g:\R^m \rightarrow \R^m$ is
some function.  We compute
\begin{align}
\begin{split}
\label{eq:gofXK}
-L(q\circ g, \,Z, Y) &= E_{(z, y)\sim (Z, Y)} \log(q_y(g(z))) \\
 &= \int\sum_{y=1}^n \, p(z, y) \log(q_y(g(z)))\,dz   \\
\end{split}
\end{align}
%

We wish to optimize \eq{gofXK} over $g:\R^m\rightarrow \R^m$.  
Let $z \in \R^m$ and 
\[
z = (z_1, \ldots, z_m) = (g_1(z), \ldots, g_m(z)) = g(z) ~,
\]
which holds when $g$ is the identity function.

The Euler-Lagrange equation concerns a functional of the form $\int F(z, g, g') \, dz$, and 
if derivatives $g'$ do not appear in the functional, then 
the minimum (with respect to $g$) is attained when the Euler-Lagrange equation holds for every $j$:
\[
\frac{\partial F}{\partial g_j} = 0 ~.
\]
Since here 
\begin{align*}
F(z, g, g') &=\sum_{y=1}^n  p(z, y) \log (q_y(g(z)) )~, \\
&= \sum_{y=1}^n  p(z, y) \log (q_y(g_1, \ldots, g_m) )
\end{align*}
we compute
\[
\frac{\partial F}{\partial g_j} =\sum_{y=1}^n  p(z, y) \frac{\partial q_y/\partial z_j}{q_y(g_1, \ldots, g_m)} = 0 ~,
\]
where $\partial q_y /\partial z_j$ means the partial derivative of $q_y$ with respect to its $j$-th component.
This shows
%
%
\begin{align*}
\sum_{y=1}^n \frac{ p(z, y)} {q_y(z)} ~~~ \frac{\partial q_y}{ \partial z_j} = 0 ~.
\end{align*}
However, from \pro{submersion}, this implies that 
$p(z, y) / q_y(z) = c$, a constant, so $p(z, y) = c\, q_y(z)$ for all $y$.
However, since $\sum_y q_y(z) = 1$ and $\sum_y p(z, y) = p(z)$, this gives
$p(z, y) = p(z)\,  q_y(z)$, or $p(y ~|~ z) = q_y(z)$, as required.
This completes the proof of \thm{calibration-theorem}. \
\qed
\end{proof}

A simple rewording of this theorem (changing the names of the variables)
gives the following statement, which is essentially a formal statement
of a result stated in~\cite{bishop1994mixture}.

\begin{corollary}
\label{cor:calibration-corollary}
Consider joint random variables $(X, Y)$, taking values
in $\domx$ and $\calY$ respectively, where $\domx$ is some Cartesian space.
Let $f: \domx \rightarrow \Delta^{n-1}$ be a function.
Define the loss
\begin{align*}
L(f,  X, Y) &= -E_{(x, y)\sim(X, Y)} \log \big(f_y(x)\big) ~.
\end{align*}
If 
\[
f = \argmin_{\hat f:\domx \rightarrow \Delta^{n-1}} L(q\circ \hat f, X, Y)
\]
then
\[
P(y ~|~ x) = f_y (x) ~.
\]
\end{corollary}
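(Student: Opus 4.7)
My plan is to obtain this corollary as an immediate restatement of \thm{calibration-theorem} with the variable names exchanged, as the surrounding prose already indicates. Since $\domx$ is assumed Cartesian (say $\R^m$), the random variable $X$ plays the role of $Z$, and a function $\hat f:\domx \to \Delta^{n-1}$ plays the role of $g$ in the theorem (noting that $\Delta^{n-1} \subset \R^n$, so $\hat f$ is in particular a map into a Euclidean space). The submersion $q:\R^m\to\Delta^{n-1}$ is kept as is.

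First I would align the hypotheses. The condition that $f$ minimizes $L(q\circ \hat f, X, Y)$ over all $\hat f:\domx\to\Delta^{n-1}$ corresponds, after relabeling, to the condition in \thm{calibration-theorem} that the identity function minimizes $L(q\circ g, Z, Y)$ over all $g:\R^m\to\R^m$. The two loss functionals agree as expressions under \lem{expectation-transfer} applied with $Z = X$, so the optimizer $f$ in the corollary is the analogue of the theorem's $\mathrm{id}$ (up to a harmless reparametrization of coordinates on $\R^m$ that moves the optimum to the identity).

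Second I would invoke the conclusion of \thm{calibration-theorem}, which gives $P(y\,|\,z) = q_y(z)$ at the optimum. Translating back via $Z \leftrightarrow X$ this reads $P(y\,|\,x) = q_y(f(x))$. The corollary's $f_y(x)$ is then read as the $y$-th component of $q\circ f$ evaluated at $x$; equivalently, one can take the case in which $q$ is the identity on $\Delta^{n-1}$ (the natural specialization when $\hat f$ already lands in the simplex), so that $q_y(f(x)) = f_y(x)$ literally. Either way, the stated conclusion $P(y\,|\,x) = f_y(x)$ follows.

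The only real obstacle is bookkeeping: the corollary narrows the optimization class from $\R^m\to\R^m$ (in the theorem) to $\domx\to\Delta^{n-1}$ (in the corollary), so one must check that no generality is lost in passing between the two. But the theorem's optimal $g$ composed with $q$ lands in $\Delta^{n-1}$ by construction, so the minimum value of the loss is attained within the narrower class as well. Consequently the Euler--Lagrange computation and the application of \pro{submersion} from the proof of \thm{calibration-theorem} carry over verbatim, and no new calculation is required.
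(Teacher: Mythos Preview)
Your identification of variables is off, and it makes the argument tangled where the paper's is one line. The paper simply takes $q$ in \thm{calibration-theorem} to be the optimal $f$ itself and $Z$ to be $X$: since $f$ minimizes the loss over \emph{all} $\hat f:\domx\to\Delta^{n-1}$, it in particular minimizes over the smaller family $\{f\circ g : g:\domx\to\domx\}$, which is exactly the hypothesis of \thm{calibration-theorem} (with $q\leftarrow f$). The conclusion $P(y\mid x)=f_y(x)$ is then immediate.

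You instead keep $q$ as a separate submersion and try to let $\hat f$ play the role of $g$; but $g:\R^m\to\R^m$ while $\hat f:\domx\to\Delta^{n-1}$, so the codomains do not match, and your ``harmless reparametrization'' is asked to absorb this mismatch without justification. Your suggestion to take $q$ to be the identity on $\Delta^{n-1}$ does not fit the theorem's hypothesis either, since that is not a map $\R^m\to\Delta^{n-1}$, let alone a submersion. More seriously, your third paragraph inverts the containment: the corollary's optimization class (all maps $\domx\to\Delta^{n-1}$) is the \emph{larger} one, and the implication you need is ``minimum over the larger class $\Rightarrow$ minimum over the subclass $\{f\circ g\}$'', which is trivial in that direction. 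Arguing, as you do, that the theorem's optimum lands in the corollary's class does nothing to show the corollary's hypothesis implies the theorem's. Once you set $q\leftarrow f$, the ``obstacle'' you discuss evaporates and no Euler--Lagrange recomputation is required.
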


This corollary follows directly from \thm{calibration-theorem} since
if $f$ minimizes the cost function over all functions, then it optimizes 
the cost over all functions $f\circ g$, where $g: \domx \rightarrow \domx$.

However, when $\domx$ is a high-dimensional space (such as a space of
images), then it may be a very difficult task to find the optimum function
$f$ exactly.  Fortunately, a much less stringent condition is enough to
ensure the conclusion of the theorem, and that the network ($f$) is calibrated.
\begin{corollary}
\label{thm:calibration-theorem-sup}
Consider joint random variables $(X, Y)$, taking values
in $\domx$ and $\calY$ respectively.
Let $f:\domx \rightarrow \R^m$,
and let $q: \R^m \rightarrow \Delta^{n-1}$ be a submersion.
Define the loss
\begin{align*}
L(q\circ f,  X, Y) &= -E_{(x, y)\sim(X,Y)} \log\big(q_y(f(x))\big) ~.
\end{align*}
If $f$ is is optimal with
respect to recalibration, for this cost function,
then $P(y ~|~ f(x)) =  q_y(f(x))$.
\end{corollary}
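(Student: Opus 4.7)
The plan is to reduce the claim to \thm{calibration-theorem} by a change of variables into the logit space. Let $Z = f(X)$, a random variable taking values in $\R^m$. Applying \lem{expectation-transfer} for each fixed label $y$ to the integrand $h(z) = \log q_y(g(z))$, and then averaging over $y$ against the common joint distribution, one obtains $L(q\circ g \circ f,\, X, Y) = L(q \circ g,\, Z, Y)$ for every measurable $g : \R^m \rightarrow \R^m$. This identity is the whole engine of the argument: it says that the loss we actually optimise, $L(q \circ g \circ f, X, Y)$, is identical (as a functional of $g$) to the loss $L(q \circ g, Z, Y)$ studied in the earlier theorem.

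Next I would translate the optimality hypothesis. By the definition of ``$f$ is optimal with respect to recalibration'', the identity function is an $\argmin$ over $g : \R^m \rightarrow \R^m$ of the left-hand side of the above identity; hence it is also an $\argmin$ of the right-hand side. Since $q$ is assumed to be a submersion, this is precisely the hypothesis of \thm{calibration-theorem} applied to the pair $(Z, Y)$, and that theorem yields $P(y ~|~ z) = q_y(z)$ for every $z$ in the support of $Z$. Substituting $z = f(x)$ gives $P(y ~|~ f(x)) = q_y(f(x))$, which is exactly the conclusion.

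The main thing to be careful about, rather than any hard step, is that the function class over which we minimise is preserved by the change of variables: on both sides $g$ ranges over all measurable maps $\R^m \rightarrow \R^m$, so the optimality condition transfers cleanly from the $(X,Y)$ formulation to the $(Z, Y)$ formulation. A minor secondary point is that the conclusion $P(y~|~z) = q_y(z)$ only a priori holds on the support of $Z = f(X)$, but since the corollary only asserts the equality at points of the form $z = f(x)$, this is automatic. Beyond these bookkeeping observations, no new computation is required; the corollary is a direct consequence of \lem{expectation-transfer} together with \thm{calibration-theorem}, in accordance with the ``pull the loss back into logit space, then apply the known result'' strategy sketched before the statement.
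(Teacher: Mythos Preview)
Your proposal is correct and follows essentially the same approach as the paper: push the loss into $Z$-space via \lem{expectation-transfer}, transfer the optimality-of-identity hypothesis, and invoke \thm{calibration-theorem}. If anything, you are slightly more explicit than the paper in stating the identity $L(q\circ g\circ f, X, Y)=L(q\circ g, Z, Y)$ for \emph{all} $g$ (the paper writes it only for $g=\text{id}$ and leaves the general case implicit), and your remarks on the support of $Z$ and the preservation of the function class are welcome bookkeeping that the paper omits.
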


\begin{proof}
Let $Z = f(X)$, and for any $x$ define $f(x) = z$.  Then, from
\lem{expectation-transfer}, 
\begin{align*}
L(q\circ f,  X, Y) &= -E_{(x, y)\sim(X,Y)} \log\big(q_y(f(x))\big)  \\
 &= -E_{(z, y)\sim(Z,Y)} \log\big(q_y(z)\big)  \\
 &= L(q, Z, Y) ~.                    
\end{align*}
Then, according to \thm{calibration-theorem}, if replacing
$q$ by $q\circ g$ will not decrease the value of the loss function
(which is the definition that $f$ is optimal with respect to recalibration),
we may conclude that 
\[
P(y ~|~ z) = q_y (z) ~.
\]
as required.
\qed
\end{proof}

\section{Multiclass and classwise calibration}

We make the usual assumption of random variables $X$ and $Y$.
Suppose that a function ${f:\domx \rightarrow \Delta^{n-1}}$ is 
multiclass calibrated, which means that $P(y ~|~ z) = z_y$, where
$z = f(x)$.  We wish to show that it is classwise calibrated,
meaning $P(y ~|~ z_y) = z_y$, and also that it is calibrated 
for top-$r$ and within-top-$r$ calibration.
It was stated in~\cite{kull2019beyond} that classwise calibration, and calibration
for the top class are ``weaker'' concepts of calibration, but no
justification was given there.  Hence, we fill that gap in the theorem below.

The proof is not altogether trivial, since certainly $P(y ~|~ z_y)$
is not equal to $P(y ~|~ z)$ in general.
Neither does it follow from the fact that
$Z = z$ implies $Z_y = z_y$.

First, we change notation just a little.  Let $\haty$ be the so-called $n$-dimensional one-hot
vector of $y$, namely an indicator vector such that 
$\haty_k = 1$ if $y = k$ and $0$ otherwise.  Then the condition 
for multi-class calibration is 
\[
P(\haty_y = 1 ~|~ z_y = \sigma) = \sigma
\]

\paragraph{The top-$r$ prediction. }
We wish also to talk about calibration of the top-scoring class predictions.
Suppose a classifier $f$ is given with values in $\Delta^{n-1}$
and let $y$ be the ground truth label.
Let us use $z^r$ to denote the $r$-th top score (so $z^1$ 
would denote the top score). Note that an upper index, such as  in $z^r$ here
represents the $r$-th top value, whereas lower indices, such as $z_y$
represent the $y$-th class.
Similarly, define
%
$\haty^r$ to be $1$ if the $r$-th top predicted class is the correct
(ground-truth) choice, and $0$ otherwise.
The network is calibrated for the top-$r$ predictor if for all scores $\sigma$,
\begin{equation}
\label{eq:top-r-calibrated}
P(\haty^r = 1 ~|~ z^r = \sigma) = \sigma  ~.
\end{equation}
%
In words, the conditional probability that the top-$r$-th choice of the network is
the correct choice, is equal
to the $r$-th top score.  

Similarly, one may consider probabilities that a datum belongs to one of the top-$r$ scoring classes.
The classifier is calibrated for being within-the-top-$r$ classes if
\begin{equation}
\label{eq:within-top-r-calibrated}
P\big(\textstyle\sum_{s=1}^r \haty^s = 1 ~\big|~ \sum_{s=1}^r z^s = \sigma\big) = \sigma  ~.
\end{equation}
Here, the sum on the left is $1$ if the ground-truth label is among the
top $r$ choices, $0$ otherwise, and the sum on the right is the sum
of the top $r$ scores.
 

\begin{theorem}
Suppose random variables $X$ and $Y$ defined on $\domx$ and
$\calY$ respectively, and let $f:\domx \rightarrow \Delta^{n-1}$ be
a measurable function.  Suppose that $P( y ~|~ z) = z_y$, where $z = f(x)$.
Then $f$ is classwise calibrated, and also calibrated for
top-$r$ and within-top-$r$ classes, as defined by \eq{top-r-calibrated}
and \eq{within-top-r-calibrated}.
\end{theorem}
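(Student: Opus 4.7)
My plan is to prove all three parts by a single technique: conditioning first on the finer $\sigma$-algebra generated by $Z = f(X)$ (which is what the multi-class calibration hypothesis directly constrains), and then applying the tower property to push the conclusion onto the coarser $\sigma$-algebra generated by $z_y$, by $z^r$, or by $\sum_{s=1}^r z^s$.

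For \emph{classwise calibration}, I would write
\[
 P(\haty_y = 1 \mid z_y = \sigma) = E[\haty_y \mid z_y = \sigma] = E\bigl[\,E[\haty_y \mid z]\,\bigm|\, z_y = \sigma\,\bigr],
\]
observe that by the hypothesis $E[\haty_y \mid z] = P(y \mid z) = z_y$, and conclude the inner expectation is $z_y$, whose conditional expectation given $z_y = \sigma$ is $\sigma$. This is the cleanest of the three and warms us up for the pattern.

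For \emph{top-$r$} calibration, the key observation is that for each value of $z$, there is a deterministic index $y^r(z) \in \calY$ naming the $r$-th top class, and the $r$-th order statistic satisfies $z^r = z_{y^r(z)}$. Then $\haty^r = 1$ iff the ground-truth label equals $y^r(z)$, so $E[\haty^r \mid z] = P(Y = y^r(z) \mid z) = z_{y^r(z)} = z^r$ by multi-class calibration. Applying the tower property conditionally on $z^r = \sigma$ finishes this case. The \emph{within-top-$r$} case is identical with $\haty^r$ replaced by $\sum_{s=1}^r \haty^s$ and $z^r$ replaced by $\sum_{s=1}^r z^s$: the inner conditional expectation given $z$ is $\sum_{s=1}^r z_{y^s(z)} = \sum_{s=1}^r z^s$, so conditioning on the latter equalling $\sigma$ yields $\sigma$ again.

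The only real care is in the second and third parts, because there the index being summed over depends on $z$; one needs to note explicitly that $y^r$ is a measurable function of $z$ (ties can be broken in any measurable way, say lexicographically) so that $z_{y^r(z)}$ is a well-defined random variable equal pointwise to $z^r$. Once that is granted, the multi-class hypothesis $P(y \mid z) = z_y$ applied at $y = y^r(z)$ is just an evaluation, and the tower property handles the rest. Thus the main obstacle is bookkeeping rather than probabilistic depth: making the argument of $P(Y=\cdot \mid z)$ a legitimate $z$-measurable random index so the tower property applies. No additional hypotheses on $f$ beyond measurability are needed.
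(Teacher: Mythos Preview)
Your proposal is correct and follows essentially the same approach as the paper: both arguments first condition on the full vector $z$, use the multi-class hypothesis to identify the inner conditional expectation (e.g., $E[\haty^r \mid z] = z^r$), and then pass to the coarser conditioning event. The only difference is presentational---you invoke the tower property of conditional expectation as a unified device, whereas the paper writes out the marginalization integrals explicitly for the classwise and top-$r$ cases and argues the within-top-$r$ case via ``if $P(A\mid Z=z)=\sigma$ for all $z\in C$ then $P(A\mid Z\in C)=\sigma$''; these are the same computation in different clothing.
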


\begin{proof}
We assume that $f$ is multiclass calibrated,
so that $P(\haty_y = 1 ~|~ z) = z_y$.  
First, we observe that 
\begin{equation}
\label{eq:Pkz}
P(\haty_y = 1, z) = P(\haty_y = 1 ~|~ z) \, P(z) = z_y P(z) ~.
\end{equation}
Then,
\begin{align*}
P(\haty_y = 1 ~|~ z_y=\sigma) &= P(\haty_y = 1, z_y=\sigma) ~/~ P(z_y = \sigma) \\
                     &= \int_{z_y=\sigma} P(\haty_y = 1, z) \, dz ~/~ P(z_y = \sigma)
\end{align*}
where the integral marginalizes over all values of $z$ with $y$-th entry equal to $\sigma$.
Continuing, using \eq{Pkz} gives
\begin{align*}
P(\haty_y = 1 ~|~ z_y=\sigma) &= \int_{z_y=\sigma} z_y P(z) \, dz ~/~ P(z_y = \sigma) \\
                     &= z_y \int_{z_y=\sigma} P(z) \, dz ~/~ P(z_y = \sigma) \\
                     &= z_y P(z_y = \sigma) ~/~ P(z_y = \sigma) = z_y
\end{align*}
which proves that $f$ is classwise calibrated.

Next, we show that $f$ is top-$r$ calibrated.  The proof is much the same,
using top indices rather than lower indices.
Analogously to \eq{Pkz}, we have
\begin{equation}
\label{eq:Pkzr}
P(\haty^r=1, z) = P(\haty^r=1 ~|~ z)  \, P(z) = z^r P(z) ~.
\end{equation}
This equation uses the equality $P(\haty^r=1 ~|~ z) = z^r$.  To see this, fix $z$,
and let $y$ be the index of the $r$-th highest entry of $z$.
Then $z^r = z_y$ and $\haty^r = \haty_y$.  Then 
$P(\haty^r=1 ~|~ z) = P(\haty_y = 1 ~|~ z) = z_y = z^r$.

Then,
\begin{align*}
P\big(\haty^r  = 1 ~|~ z^r = \sigma\big) 
   &= P\big(\haty^r  = 1,  z^r= \sigma\big) / P(z^r = \sigma) \\
   & =\int_{z^r=\sigma} P\big(\haty^r  = 1,  z\big)\, dz ~/~ P(z^r = \sigma) \\
   &\stackrel{\text{\eq{Pkzr}}}{=} z^r \int_{z^r=\sigma} P(z) \, dz ~/~ P(z^r = \sigma) \\
   &= z^r  P(z^r = \sigma)  ~/~  P(z^r = \sigma) \\
   &= z^r ~.
\end{align*}
Here, the integral is over all $z$ such that $z^r = \sigma$.  This shows that
$f$ is top-$r$ calibrated.

Finally, we prove within-top-$r$ calibration.  Refer to \eq{within-top-r-calibrated}, let $\sigma$ be fixed, and let $z$ be some vector such 
that $\sum_{s=1}^r z^s = \sigma$.
 Since for $s=1, \ldots, r$ the events
$\haty^s = 1$ are mutually exclusive, it follows that
\begin{align*}
P\big(\textstyle\sum_{s=1}^r \haty^s = 1 ~\big|~ z \big) ~=~
\textstyle\sum_{s=1}^r P\big( \haty^s ~=~ 1 ~\big|~ z \big)= \textstyle\sum_{s=1}^r z^s = \sigma ~.
\end{align*}
This equality $P\big(\textstyle\sum_{s=1}^r \haty^s = 1 ~\big|~ z \big)  = \sigma$
will hold for any $z$ such that $\sum_{s=1}^r z^s = \sigma$.  It follows that
\[
P\big(\textstyle\sum_{s=1}^r \haty^s = 1 ~\big|~ 
\sum_{s=1}^r z^s = \sigma \big) = \sigma ~,
\]
as required.

Note the following justification for this last step.  If some
random variables $A$ and $Z$
satisfy $P(A = a ~|~ Z = z) = \sigma$ (a constant) for all $z$ in some class $C$,
then $P(A = a ~|~ z \in C) = \sigma$.  For, the assumption implies 
that $P(A = a, Z = z) = \sigma P(Z = z)$.  Now, integrating for $z \in C$ gives
$P( A = a, Z \in C) = \sigma P(Z \in C)$, and hence $P(A = a ~|~ Z \in C) = \sigma$.
\qed
\end{proof}

What this theorem is saying, for instance, is that the probability
that the correct classification lies within the top $2$ (or $r$) scoring classes,
given that the sum of these two scores is $\sigma$, is equal to the
sum of the two top scores. 

The theorem can easily be extended to any set of classes, to show that 
if the classifier $f$ is multiclass calibrated and $S$ is any set of labels, that
\begin{equation}
P\big(\textstyle\sum_{s\in S} \haty^s = 1 ~\big|~ \sum_{s\in S} z^s = \sigma\big) = \sigma  ~,
\end{equation}
and 
\begin{equation}
P\big(\textstyle\sum_{s\in S} \haty_s = 1 ~\big|~ \sum_{s\in S} z_s = \sigma\big) = \sigma  ~.
\end{equation}

\section{Additional Results}
In this section we provide additional results complementary to the results in the main paper.

For our experimental validation we calibrate deep convolutional neural networks trained on CIFAR-10/CIFAR-100~\cite{krizhevsky2009learning}, SVHN~\cite{netzer2011reading} and ILSVRC'12~\cite{ILSVRC15} datasets. 
For the base network $f$ we use pre-trained models of different architectures: ResNet~\cite{he2016deep}, ResNet Stochastic Depth~\cite{huang2016eccv}, DenseNet~\cite{huang2017densely}, and Wide ResNet~\cite{zagoruyko2016wide}.
For these experiments we use the pre-trained models also used in~\cite{kull2019beyond}\footnote{Pre-trained models are obtained from: \url{https://github.com/markus93/NN_calibration}.}.
We train the proposed $g$-layers on the validation set (not used for training base networks) which we denote \emph{calibration set}.
The models are then evaluated on the unseen test set.

\subsection{Calibration Error and Accuracy}
\renewcommand{\bst}[1]{\textbf{#1}}
\begin{table*}[p]
\begin{subtable}[b]{\textwidth}
	\centering
	\resizebox{.9\textwidth}{!}{
	\scriptsize
	\begin{tabular}{llcccc|ccccc}
		\toprule
                Dataset                   & Base Network              &         Unc &          TS &          MS &         Dir &         \multicolumn{5}{c}{\textbf{$g$-Layers}}\\\		                          &                           &             &             &             &             &          D1 &          D2 &          D3 &          D4 &          D5 \\\midrule
		CIFAR-10                  & ResNet 110                &       4.751 &  \bst{0.917}&       0.988 &       1.076 &       0.924 &       0.990 &       0.954 &       1.116 &       1.066 \\
		                          & ResNet 110 SD             &       4.103 &       0.362 &       0.331 &       0.368 &       0.317 &       0.378 &       0.342 &       0.307 &  \bst{0.188}\\
		                          & Wide ResNet 32            &       4.476 &       0.296 &  \bst{0.284}&       0.313 &       0.320 &       0.296 &       0.351 &       0.337 &       0.420 \\
		                          & DensNet 40                &       5.493 &       0.900 &       0.897 &       0.969 &       0.911 &       1.026 &  \bst{0.669}&       1.679 &       1.377 \\\midrule
		SVHN                      & ResNet 152 SD             &       0.853 &  \bst{0.553}&       0.572 &       0.588 &       0.593 &       0.561 &       0.579 &       0.588 &       0.564 \\\midrule
		CIFAR-100                 & ResNet 110                &      18.481 &       1.489 &       2.541 &       2.335 &       1.359 &       1.618 &  \bst{0.526}&       1.254 &           - \\
		                          & ResNet 110 SD             &      15.833 &       0.748 &       2.158 &       1.901 &  \bst{0.589}&       1.165 &       0.848 &       0.875 &           - \\
		                          & Wide ResNet 32            &      18.784 &       1.130 &       2.821 &       2.000 &       0.831 &  \bst{0.757}&       1.900 &       0.857 &           - \\
		                          & DensNet 40                &      21.157 &       0.305 &       2.709 &       0.775 &       0.249 &  \bst{0.188}&       0.199 &       0.203 &           - \\\midrule
		ILSVRC'12                 & ResNet 152                &       6.544 &       0.792 &       5.355 &       4.400 &       0.776 &  \bst{0.755}&       0.849 &       0.757 &  -\\
		                          & DensNet 161               &       5.721 &  \bst{0.744}&       4.333 &       3.824 &       0.881 &       1.091 &       0.780 &       1.105 &       - \\\bottomrule
	\end{tabular}
	}
	\caption{Calibration error on the test set, measured with KS error (Top-1 in \%).}
	\label{tab:testKSE}
\end{subtable}\vspace{5mm}

\begin{subtable}[b]{\textwidth}
	\centering
	\resizebox{.9\textwidth}{!}{
	\scriptsize
	\begin{tabular}{llcccc|ccccc}
		\toprule
                Dataset                   & Base Network              &         Unc &          TS &          MS &         Dir &         \multicolumn{5}{c}{\textbf{$g$-Layers}}\\\		                          &                           &             &             &             &             &          D1 &          D2 &          D3 &          D4 &          D5 \\\midrule
		CIFAR-10                  & ResNet 110                &   \bst{93.6}&   \bst{93.6}&        93.5 &        93.5 &   \bst{93.6}&        93.5 &        93.5 &        93.5 &        93.4 \\
		                          & ResNet 110 SD             &        94.0 &        94.0 &   \bst{94.2}&   \bst{94.2}&   \bst{94.2}&        94.1 &        94.1 &        94.1 &        94.0 \\
		                          & Wide ResNet 32            &        93.9 &        93.9 &   \bst{94.2}&   \bst{94.2}&   \bst{94.2}&   \bst{94.2}&        94.1 &        94.1 &        94.0 \\
		                          & DensNet 40                &        92.4 &        92.4 &        92.5 &        92.5 &   \bst{92.6}&        92.4 &        92.4 &        92.3 &        92.3 \\\midrule
		SVHN                      & ResNet 152 SD             &   \bst{98.2}&   \bst{98.2}&        98.1 &   \bst{98.2}&   \bst{98.2}&   \bst{98.2}&   \bst{98.2}&   \bst{98.2}&        98.1 \\\midrule
		CIFAR-100                 & ResNet 110                &        71.5 &        71.5 &   \bst{71.6}&   \bst{71.6}&        71.5 &        71.5 &        71.5 &        71.5 &           - \\
		                          & ResNet 110 SD             &        72.8 &        72.8 &   \bst{73.5}&        73.1 &        73.0 &        72.9 &        73.2 &        73.2 &           - \\
		                          & Wide ResNet 32            &        73.8 &        73.8 &   \bst{74.0}&   \bst{74.0}&        73.8 &        73.9 &        73.9 &        73.8 &           - \\
		                          & DensNet 40                &        70.0 &        70.0 &   \bst{70.4}&        70.2 &        70.0 &        70.1 &        70.2 &        70.2 &           - \\\midrule
		ILSVRC'12                 & ResNet 152                &   \bst{76.2}&   \bst{76.2}&        76.1 &   \bst{76.2}&   \bst{76.2}&   \bst{76.2}&   \bst{76.2}&   \bst{76.2}&   -\\
		                          & DensNet 161               &        77.0 &        77.0 &   \bst{77.2}&   \bst{77.2}&        77.0 &        77.1 &        77.0 &        77.0 &        - \\\bottomrule
	\end{tabular}
	}
	\caption{Classification accuracy on the test set, measured with top-1 accuracy (in \%).}
	\label{tab:testACC}
\end{subtable}\vspace{5mm}

\begin{subtable}[b]{\textwidth}
	\centering
	\resizebox{.9\textwidth}{!}{
	\scriptsize
	\begin{tabular}{llcccc|ccccc}
		\toprule
                Dataset                   & Base Network              &         Unc &          TS &          MS &         Dir &         \multicolumn{5}{c}{\textbf{$g$-Layers}}\\\		                          &                           &             &             &             &             &          D1 &          D2 &          D3 &          D4 &          D5 \\\midrule
		CIFAR-10                  & ResNet 110                &       1.102 &       0.979 &  \bst{0.975}&       0.976 &       0.979 &       0.976 &       0.978 &       0.983 &       0.988 \\
		                          & ResNet 110 SD             &       0.981 &       0.874 &  \bst{0.866}&       0.867 &  \bst{0.866}&       0.870 &       0.875 &       0.875 &       0.887 \\
		                          & Wide ResNet 32            &       1.047 &       0.924 &       0.890 &       0.888 &       0.890 &  \bst{0.887}&       0.896 &       0.897 &       0.900 \\
		                          & DensNet 40                &       1.274 &       1.100 &  \bst{1.096}&       1.097 &       1.099 &       1.098 &       1.102 &       1.113 &       1.121 \\\midrule
		SVHN                      & ResNet 152 SD             &       0.297 &  \bst{0.291}&       0.298 &       0.293 &  \bst{0.291}&  \bst{0.291}&  \bst{0.291}&       0.292 &       0.293 \\\midrule
		CIFAR-100                 & ResNet 110                &       0.453 &       0.392 &  \bst{0.391}&  \bst{0.391}&       0.392 &       0.392 &       0.392 &       0.393 &           - \\
		                          & ResNet 110 SD             &       0.418 &       0.367 &  \bst{0.361}&       0.363 &       0.365 &       0.366 &       0.365 &       0.364 &           - \\
		                          & Wide ResNet 32            &       0.432 &       0.355 &  \bst{0.351}&       0.354 &       0.355 &       0.354 &       0.354 &       0.354 &           - \\
		                          & DensNet 40                &       0.491 &       0.401 &  \bst{0.400}&  \bst{0.400}&       0.401 &  \bst{0.400}&       0.401 &       0.401 &           - \\\midrule
		ILSVRC'12                 & ResNet 152                &       0.034 &  \bst{0.033}&  \bst{0.033}&  \bst{0.033}&  \bst{0.033}&  \bst{0.033}&  \bst{0.033}&  \bst{0.033}&  -\\
		                          & DensNet 161               &       0.033 &  \bst{0.032}&  \bst{0.032}&  \bst{0.032}&  \bst{0.032}&  \bst{0.032}&  \bst{0.032}&  \bst{0.032}&  -\\\bottomrule
	\end{tabular}
	}
	\caption{Calibration error on the test set, measured with the Brier error (multiplied by constant 100).}
	\label{tab:testBRI}
\end{subtable}\vspace{5mm}

\caption{Overview of calibration error and accuracy on the (unseen) test set -- part 1.}\label{tab:full_overview_results_part1}
\end{table*}

\begin{table*}
\begin{subtable}[p]{\textwidth}
	\centering
	\resizebox{.9\textwidth}{!}{
	\scriptsize
	\begin{tabular}{llcccc|ccccc}
		\toprule
                Dataset                   & Base Network              &         Unc &          TS &          MS &         Dir &         \multicolumn{5}{c}{\textbf{$g$-Layers}}\\\		                          &                           &             &             &             &             &          D1 &          D2 &          D3 &          D4 &          D5 \\\midrule
		CIFAR-10                  & ResNet 110                &       4.750 &       1.132 &       1.052 &       1.144 &       1.130 &  \bst{0.997}&       1.348 &       1.152 &       1.219 \\
		                          & ResNet 110 SD             &       4.113 &       0.555 &       0.599 &       0.739 &       0.807 &       0.809 &       0.629 &       0.674 &  \bst{0.503}\\
		                          & Wide ResNet 32            &       4.505 &       0.784 &       0.784 &       0.796 &  \bst{0.616}&       0.661 &       0.634 &       0.669 &       0.670 \\
		                          & DensNet 40                &       5.500 &       0.946 &       1.006 &       1.095 &       1.101 &       1.037 &  \bst{0.825}&       1.729 &       1.547 \\\midrule
		SVHN                      & ResNet 152 SD             &       0.862 &       0.607 &       0.616 &       0.590 &       0.638 &  \bst{0.565}&       0.589 &       0.604 &       0.648 \\\midrule
		CIFAR-100                 & ResNet 110                &      18.480 &       2.380 &       2.718 &       2.896 &       2.396 &       2.334 &  \bst{1.595}&       1.792 &           - \\
		                          & ResNet 110 SD             &      15.861 &  \bst{1.214}&       2.203 &       2.047 &       1.219 &       1.405 &       1.423 &       1.298 &           - \\
		                          & Wide ResNet 32            &      18.784 &       1.472 &       2.821 &       1.991 &       1.277 &  \bst{1.096}&       2.199 &       1.743 &           - \\
		                          & DensNet 40                &      21.156 &       0.902 &       2.709 &       0.962 &       0.927 &       0.644 &       0.562 &  \bst{0.415}&           - \\\midrule
		ILSVRC'12                 & ResNet 152                &       6.543 &       2.077 &       5.353 &       4.491 &       2.025 &       2.051 &   \bst{1.994} &       2.063 &  -\\
		                          & DensNet 161               &       5.720 &       1.942 &       4.333 &       3.926 &       1.952 &       1.978 &  \bst{1.913}&       1.931 &       - \\\bottomrule
	\end{tabular}
	}
	\caption{Calibration error on the test set, measured with ECE error (Top-1 in \%).}
	\label{tab:testECE}
\end{subtable}

\begin{subtable}[b]{\textwidth}
	\centering
	\resizebox{.9\textwidth}{!}{
	\scriptsize
	\begin{tabular}{llcccc|ccccc}
		\toprule
                Dataset                   & Base Network              &         Unc &          TS &          MS &         Dir &         \multicolumn{5}{c}{\textbf{$g$-Layers}}\\\		                          &                           &             &             &             &             &          D1 &          D2 &          D3 &          D4 &          D5 \\\midrule
		CIFAR-10                  & ResNet 110                &       1.870 &  \bst{0.460}&       0.481 &       0.481 &       0.473 &       0.479 &       0.477 &       0.499 &       0.463 \\
		                          & ResNet 110 SD             &       1.627 &       0.200 &       0.186 &       0.192 &       0.199 &       0.189 &       0.180 &       0.175 &  \bst{0.145}\\
		                          & Wide ResNet 32            &       1.759 &       0.202 &       0.193 &       0.192 &  \bst{0.162}&       0.174 &       0.203 &       0.191 &       0.218 \\
		                          & DensNet 40                &       2.168 &       0.386 &       0.382 &       0.408 &       0.401 &       0.424 &  \bst{0.293}&       0.663 &       0.562 \\\midrule
		SVHN                      & ResNet 152 SD             &       0.304 &       0.219 &  \bst{0.217}&       0.222 &       0.231 &       0.220 &       0.225 &       0.223 &       0.218 \\\midrule
		CIFAR-100                 & ResNet 110                &       5.874 &       0.829 &       1.024 &       0.977 &       0.794 &       0.857 &  \bst{0.594}&       0.711 &           - \\
		                          & ResNet 110 SD             &       5.212 &       0.349 &       0.748 &       0.696 &  \bst{0.288}&       0.429 &       0.351 &       0.375 &           - \\
		                          & Wide ResNet 32            &       6.223 &       0.599 &       1.005 &       0.780 &       0.516 &  \bst{0.491}&       0.817 &       0.558 &           - \\
		                          & DensNet 40                &       6.950 &       0.243 &       0.871 &       0.353 &       0.210 &  \bst{0.179}&       0.254 &       0.292 &           - \\\midrule
		ILSVRC'12                 & ResNet 152                &       2.078 &       0.570 &       1.691 &       1.535 &       0.569 &  \bst{0.560}&       0.565 &       0.561 &  -\\
		                          & DensNet 161               &       1.847 &       0.531 &       1.385 &       1.356 &       0.563 &       0.614 &  \bst{0.523}&       0.616 &       - \\\bottomrule
	\end{tabular}
	}
	\caption{Calibration error on the test set, measured in average Top-5 KS error (in \%).}
	\label{tab:testKSR-5}
\end{subtable}\vspace{5mm}

\begin{subtable}[b]{\textwidth}
	\centering
	\resizebox{.9\textwidth}{!}{
	\scriptsize
	\begin{tabular}{llcccc|ccccc}
		\toprule
                Dataset                   & Base Network              &         Unc &          TS &          MS &         Dir &         \multicolumn{5}{c}{\textbf{$g$-Layers}}\\\		                          &                           &             &             &             &             &          D1 &          D2 &          D3 &          D4 &          D5 \\\midrule
		CIFAR-10                  & ResNet 110                &       1.066 &       0.275 &       0.277 &       0.283 &       0.283 &       0.285 &       0.283 &       0.291 &  \bst{0.269}\\
		                          & ResNet 110 SD             &       0.918 &       0.118 &       0.113 &       0.114 &       0.117 &       0.113 &       0.108 &       0.105 &  \bst{0.088}\\
		                          & Wide ResNet 32            &       0.995 &       0.123 &       0.119 &       0.117 &  \bst{0.101}&       0.107 &       0.121 &       0.114 &       0.127 \\
		                          & DensNet 40                &       1.234 &       0.225 &       0.223 &       0.239 &       0.236 &       0.249 &  \bst{0.174}&       0.387 &       0.325 \\\midrule
		SVHN                      & ResNet 152 SD             &       0.188 &       0.134 &  \bst{0.133}&  \bst{0.133}&       0.142 &       0.134 &       0.138 &       0.137 &       0.134 \\\midrule
		CIFAR-100                 & ResNet 110                &       3.299 &       0.462 &       0.579 &       0.540 &       0.443 &       0.488 &  \bst{0.336}&       0.401 &           - \\
		                          & ResNet 110 SD             &       2.904 &       0.241 &       0.434 &       0.407 &  \bst{0.210}&       0.277 &       0.234 &       0.234 &           - \\
		                          & Wide ResNet 32            &       3.459 &       0.348 &       0.574 &       0.445 &       0.304 &  \bst{0.299}&       0.462 &       0.338 &           - \\
		                          & DensNet 40                &       3.877 &       0.181 &       0.514 &       0.219 &       0.158 &  \bst{0.128}&       0.169 &       0.183 &           - \\\midrule
		ILSVRC'12                 & ResNet 152                &       1.155 &       0.329 &       0.942 &       0.839 &       0.329 &  \bst{0.326}&       0.329 &       0.333 &  -\\
		                          & DensNet 161               &       1.040 &       0.316 &       0.783 &       0.751 &       0.334 &       0.363 &  \bst{0.306}&       0.361 &       - \\\bottomrule
	\end{tabular}
	}
	\caption{Calibration error on the test set, measured in average Top-10 KS error (in \%).}
	\label{tab:testKSR-10}
\end{subtable}\vspace{5mm}

\caption{Overview of calibration error and accuracy on the (unseen) test set -- part 2.}\label{tab:full_overview_results_part2}
\end{table*}
For this set of experiments, we train $g$-layers with different number of dense layers, in the range from 1--5.
The size of the hidden $g$-layers is fixed to 3 times the number of classes in the dataset plus 2, resulting in, 32, 302, and 3002 for CIFAR-10/SVHN, CIFAR-100, and ILSVRC'12 respectively. This relative large number is required for the transparent initialisation, which requires the number of dense units to be larger than double the number of classes.
In practice this means that the number of weights in $g$-layers scale cubic with the number of classes, \eg the 3-layer network for ILSVRC'12 contains 15M weights, while the 4-layer network has 24M.

The hyper-parameters for learning (learning rate and weight decay) are determined using 5-fold cross validation on the calibration set.
The best results are used to train the $g$-layers on the full calibration set.
We do cross validation for the single layer and 3 layer $g$-layer network, and use the parameters found for the 3 layer network also for $\{2, 4, 5\}$ layer networks.
For all $g$-layer models we use early stopping based on the negative log-likelihood of the current training set.

\paragraph{Results}
In this set of experiments we compare $g$-layer calibration to several other calibration methods, including: (1) Temperature scaling~\cite{guo2017calibration}; (2) MS-ODIR~\cite{kull2019beyond}; and (3)  Dir-ODIR~\cite{kull2019beyond}.

The results are presented in \tab{full_overview_results_part1} and \tab{full_overview_results_part2}, reporting:
\begin{enumerate}
    \setlength{\itemsep}{0pt}
    \item KS error~\cite{gupta-spline-iclr21} in \tab{testKSE};
    \item Classification accuracy in \tab{testACC};
    \item Brier score in \tab{testBRI};
    \item Expected Calibration error in \tab{testECE};    
    \item Average top-5 KS error \tab{testKSR-5}; and
    \item Average top-10 KS error \tab{testKSR-10}.
\end{enumerate}

From the \tab{testACC} we conclude that calibration does not hurt classification accuracy. The accuracy of the different models remains within $\pm .1$ absolute percent point of the accuracy of the uncalibrated network. 

Based on all results in \tab{full_overview_results_part1} and \tab{full_overview_results_part2} we conclude that the proposed $g$-layers provide effective calibrated models over a wide range of network depths.

\subsection{Top-$k$ KS Calibration Error}
\begin{figure*}[p]
    \centering
    \begin{subfigure}[b]{1.05\textwidth}
        \includegraphics[width=\textwidth]{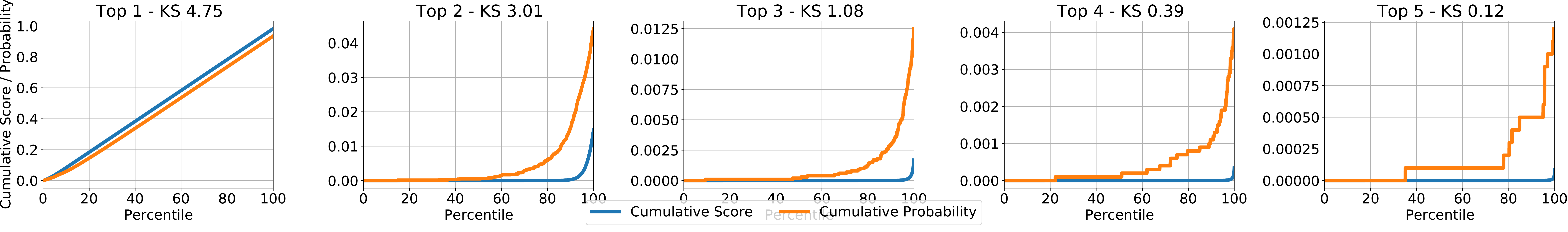}
        \includegraphics[width=\textwidth]{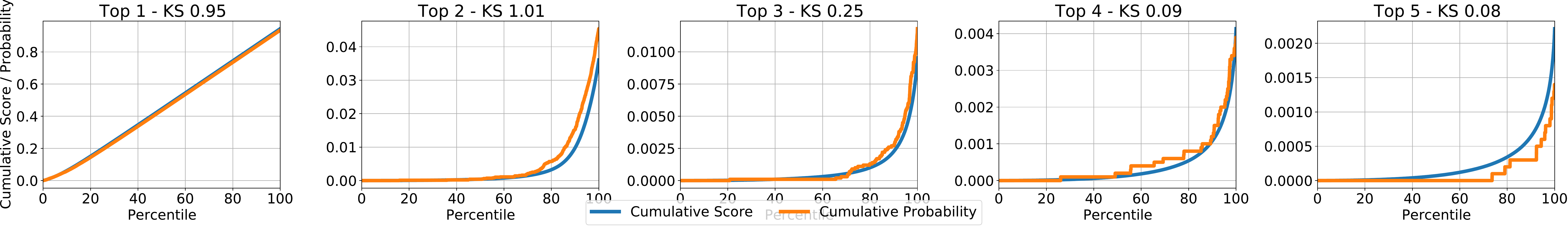}
        \caption{CIFAR-10 - ResNet 110: \textbf{top} Uncalibrated vs \textbf(bottom) Calibrated (3 $g$-layer)}
    \end{subfigure}
    \vspace{5mm}
    
    \begin{subfigure}[b]{1.05\textwidth}
        \includegraphics[width=\textwidth]{figures/iccv/c100_resnet110_plot_ks.pdf}
        \includegraphics[width=\textwidth]{figures/iccv/c100_resnet110_cal_D3_U0302_plot_ks.pdf}
        \caption{CIFAR-100 - ResNet 110: \textbf{top} Uncalibrated vs \textbf(bottom) Calibrated (3 $g$-layer)}
    \end{subfigure}
    \vspace{5mm}
    
    \begin{subfigure}[b]{1.05\textwidth}
        \includegraphics[width=\textwidth]{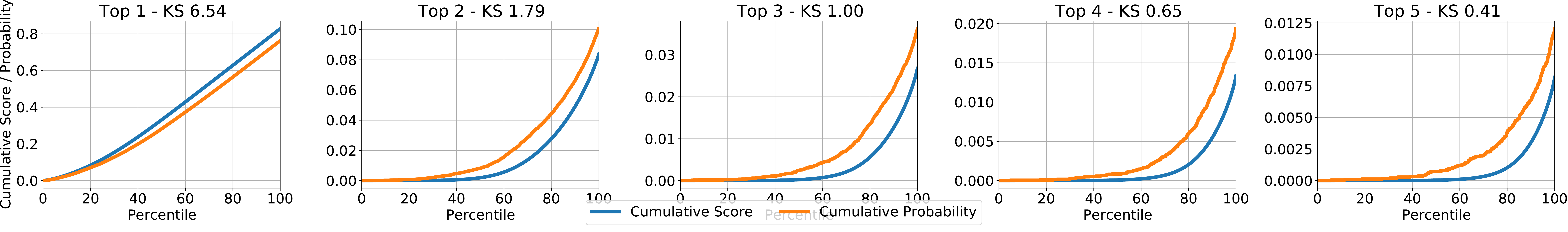}
        \includegraphics[width=\textwidth]{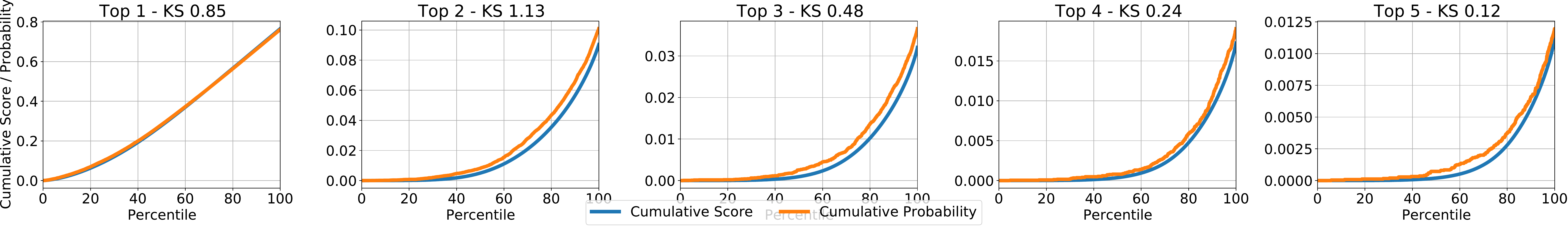}
        \caption{ImageNet - ResNet 152: \textbf{top} Uncalibrated vs \textbf(bottom) Calibrated (3 $g$-layer)}
    \end{subfigure}
    \caption{Calibration error for top-$k$ KS error, with $k = \{1, \ldots, 5\}$.}
    \label{fig:resnet_calibration_topk}
\end{figure*}
In this set of experiments we present the top-$k$ KS error for ResNet 110 on CIFAR-10 and CIFAR-100, and for ResNet 152 on ImageNet. 
We compare the uncalibrated network to a 3 $g$-layer network with 32/302, 3002 hidden units per layer (for 10/100/1000 classes).
The results are presented in~\fig{resnet_calibration_topk}.

\subsection{Number of parameters}
\begin{figure*}[p]
    \begin{subfigure}[b]{\textwidth}
        \centering
        \includegraphics[height=45mm]{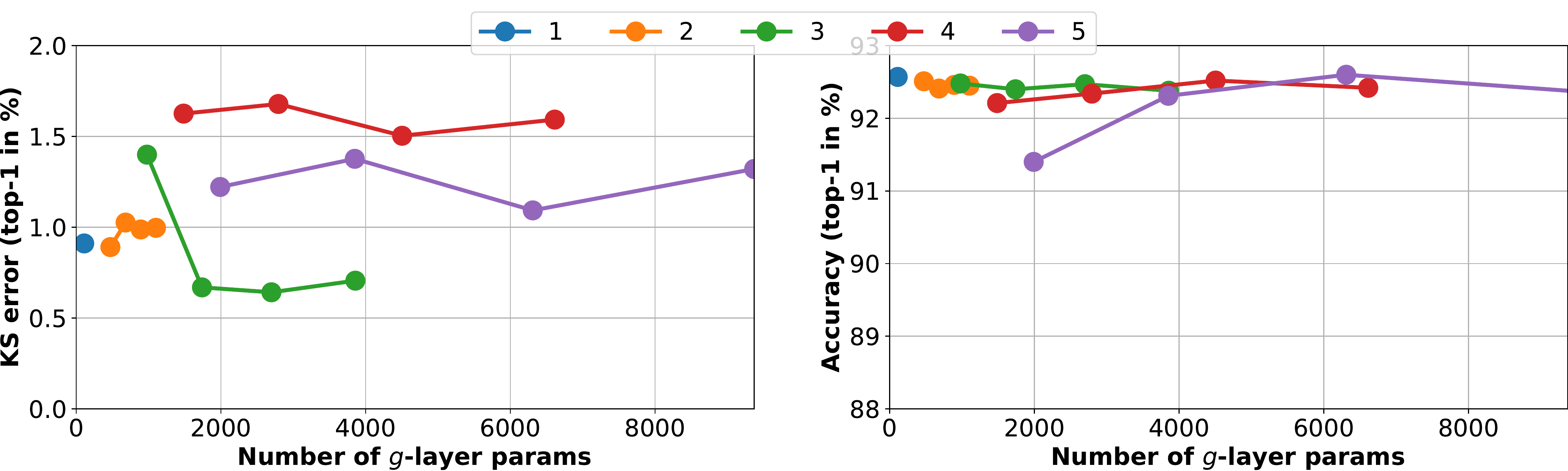}
        \caption{CIFAR-10 -- DenseNet 40}
    \end{subfigure}\vspace{5mm}
    
    \begin{subfigure}[b]{\textwidth}
        \centering
        \includegraphics[height=45mm]{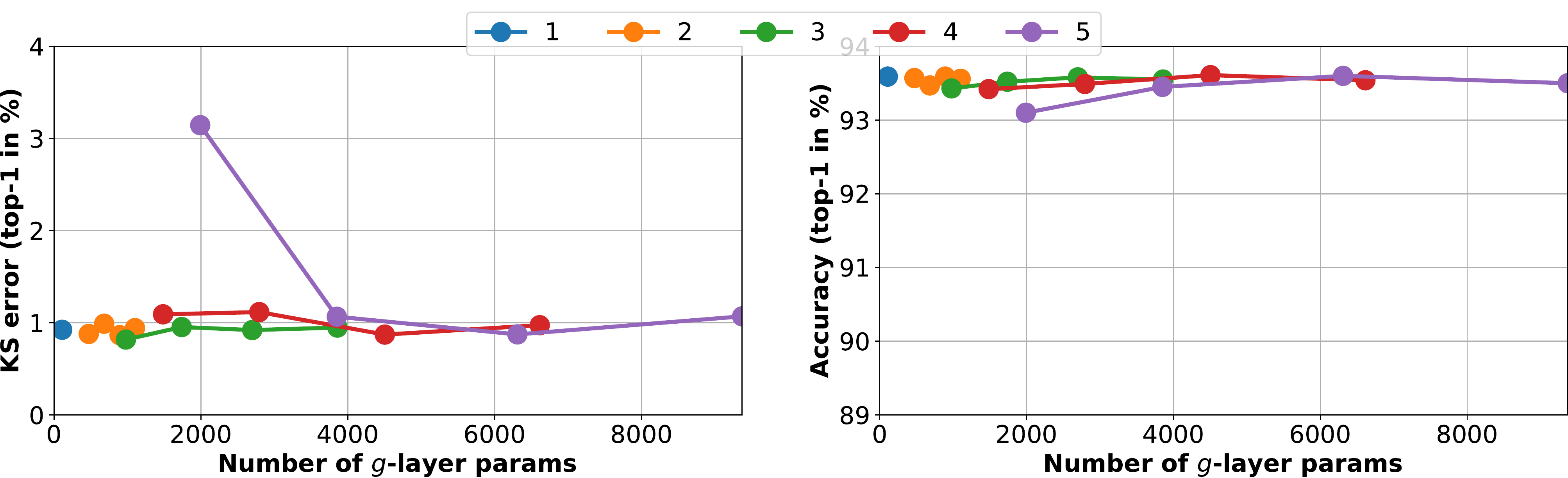}
        \caption{CIFAR-10 -- ResNet 110}
    \end{subfigure}\vspace{5mm}
    
    \begin{subfigure}[b]{\textwidth}
        \centering
        \includegraphics[height=45mm]{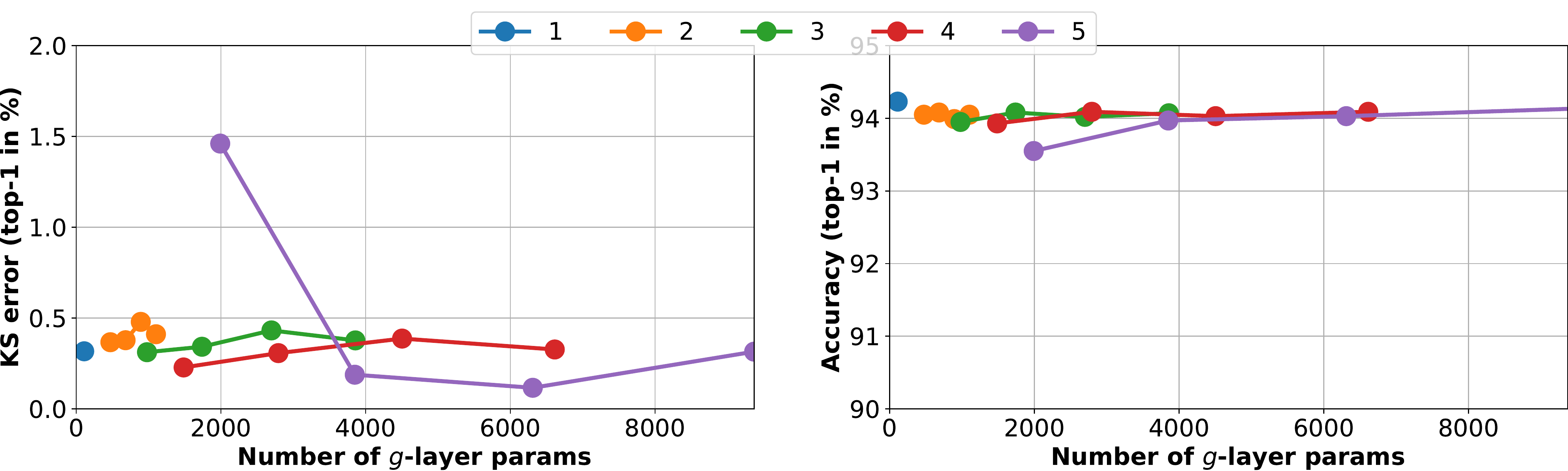}
        \caption{CIFAR-10 -- ResNet 110 SD}
    \end{subfigure}\vspace{5mm}
    
    \begin{subfigure}[b]{\textwidth}
        \centering
        \includegraphics[height=45mm]{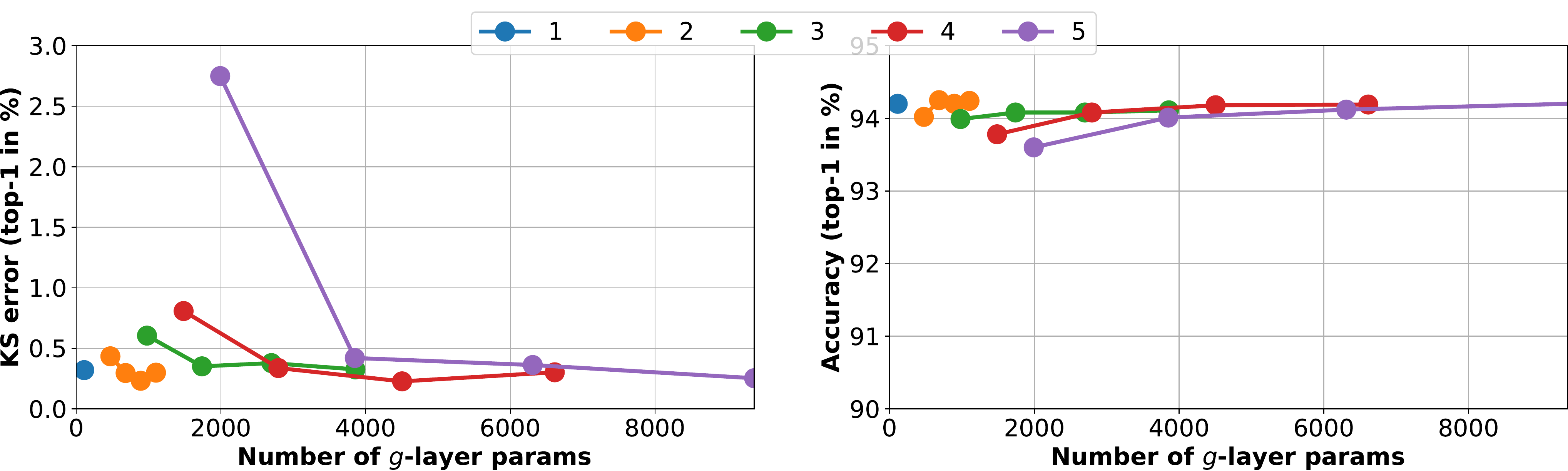}
        \caption{CIFAR-10 -- ResNet Wide 32}
    \end{subfigure}    
    
    \caption{Calibration (\textbf{left}) and Accuracy (\textbf{right}) as function of number of parameters. Number of parameters change by the number of dense layers, indicated by colored lines, and number of dense units per layer: $\{2, 3, 4, 5\}$ units per class + 2.}
    \label{fig:denselayers_denseunits_part1}
\end{figure*} 

\begin{figure*}[p]   
    \begin{subfigure}[b]{\textwidth}
        \centering
        \includegraphics[height=45mm]{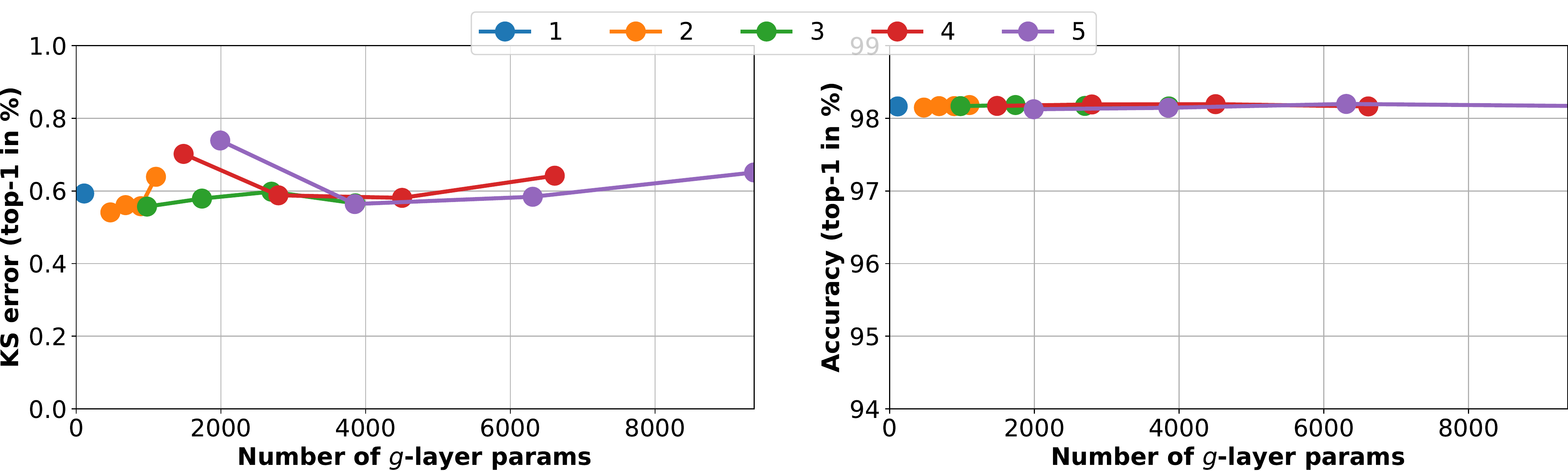}
        \caption{SVHN -- ResNet 152 SD}
    \end{subfigure}\vspace{5mm}
    
    \begin{subfigure}[b]{\textwidth}
        \centering
        \includegraphics[height=45mm]{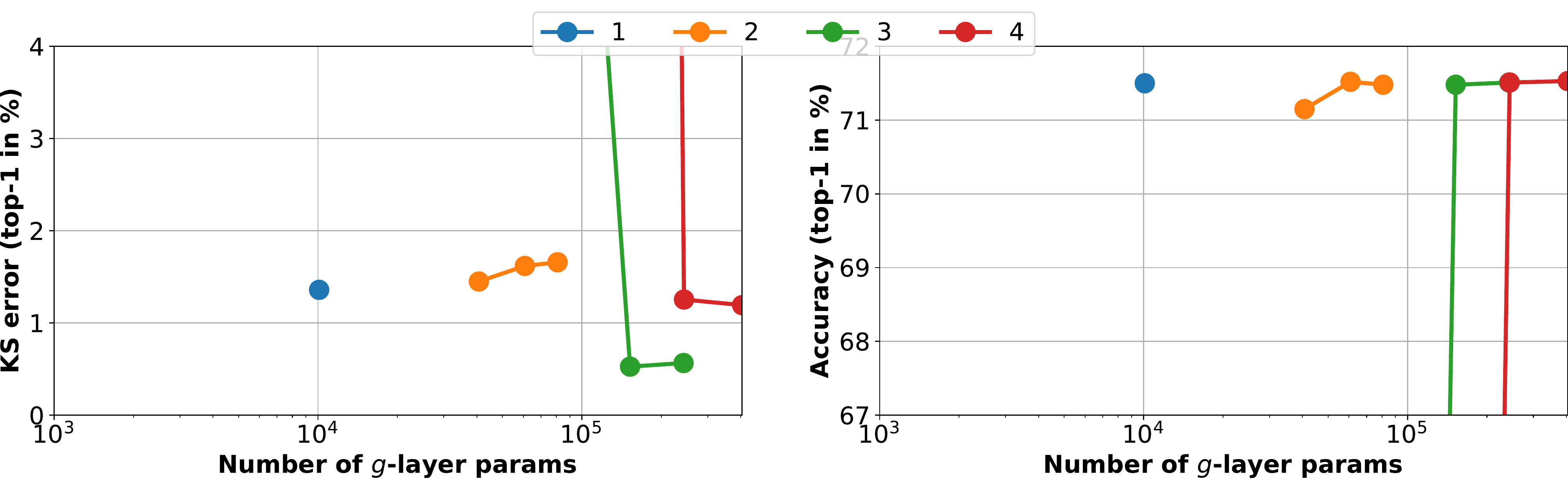}
        \caption{CIFAR-100 -- ResNet 110}
    \end{subfigure}\vspace{5mm}

    \begin{subfigure}[b]{\textwidth}
        \centering
        \includegraphics[height=45mm]{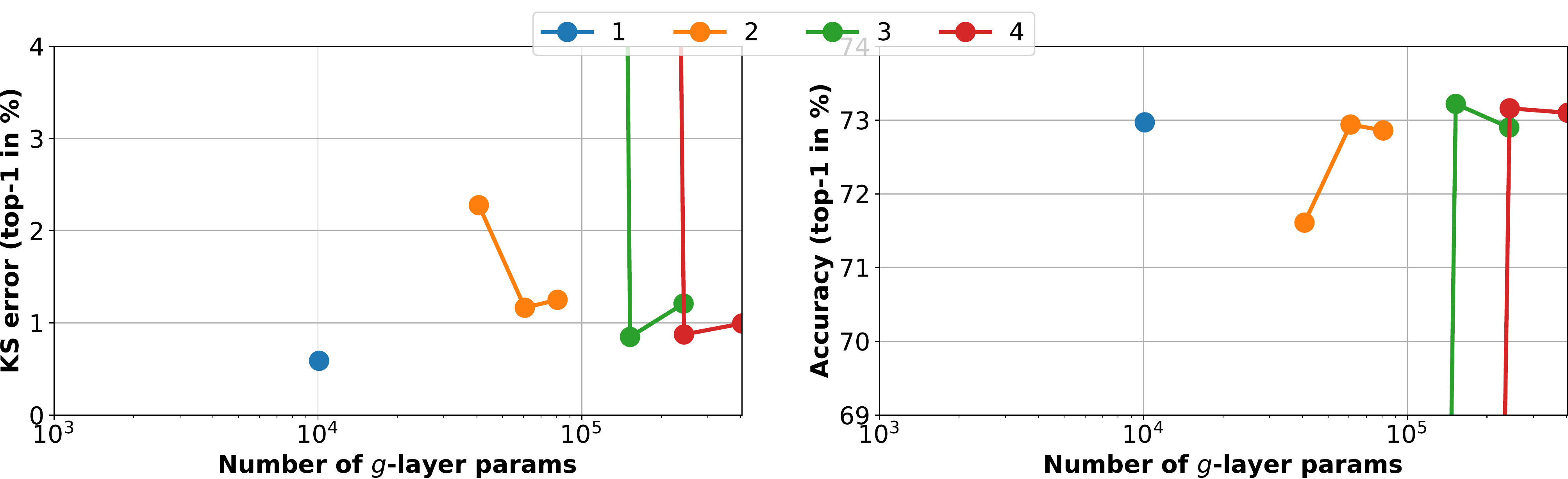}
        \caption{CIFAR-100 -- ResNet 110 SD}
    \end{subfigure}\vspace{5mm}
    
    \begin{subfigure}[b]{\textwidth}
        \centering
        \includegraphics[height=45mm]{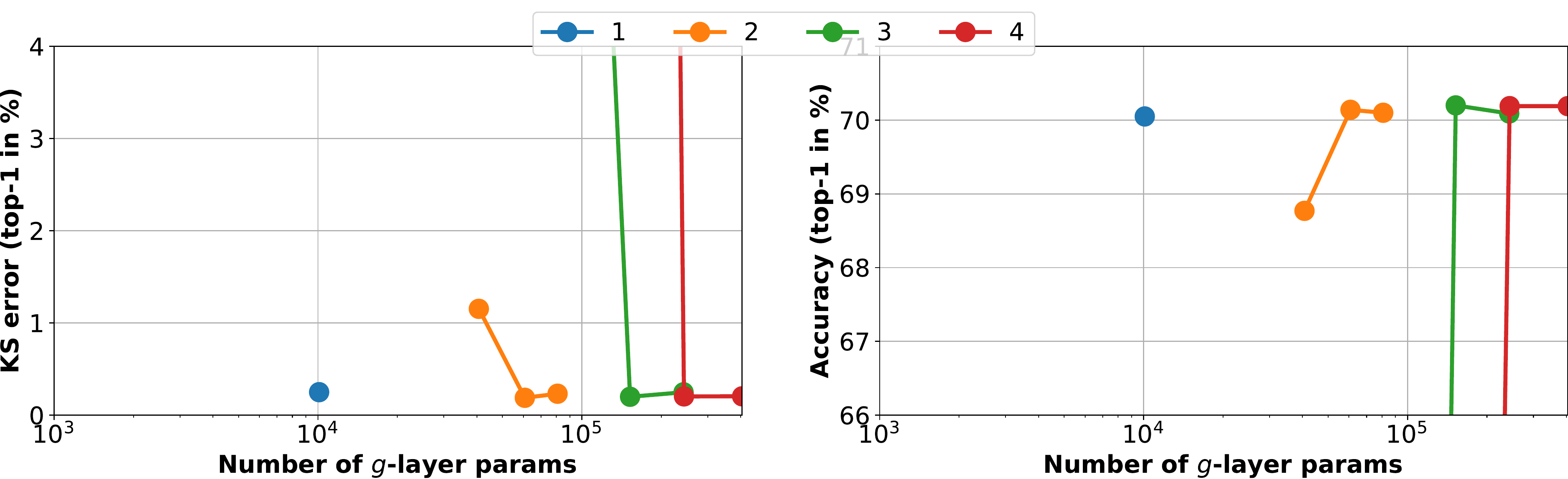}
        \caption{CIFAR-100 -- DenseNet 40}
    \end{subfigure}    
    \caption{Calibration (\textbf{left}) and Accuracy (\textbf{right}) as function of number of parameters. Number of parameters vary by the number of dense layers (colored lines), and number of dense units per layer: $\{2, 3, 4\}$ units per class (SVHN also 5) + 2.}    
    \label{fig:denselayers_denseunits_part2}
\end{figure*} 

\begin{figure*}[t]       
    \begin{subfigure}[b]{\textwidth}
        \centering
        \includegraphics[height=45mm]{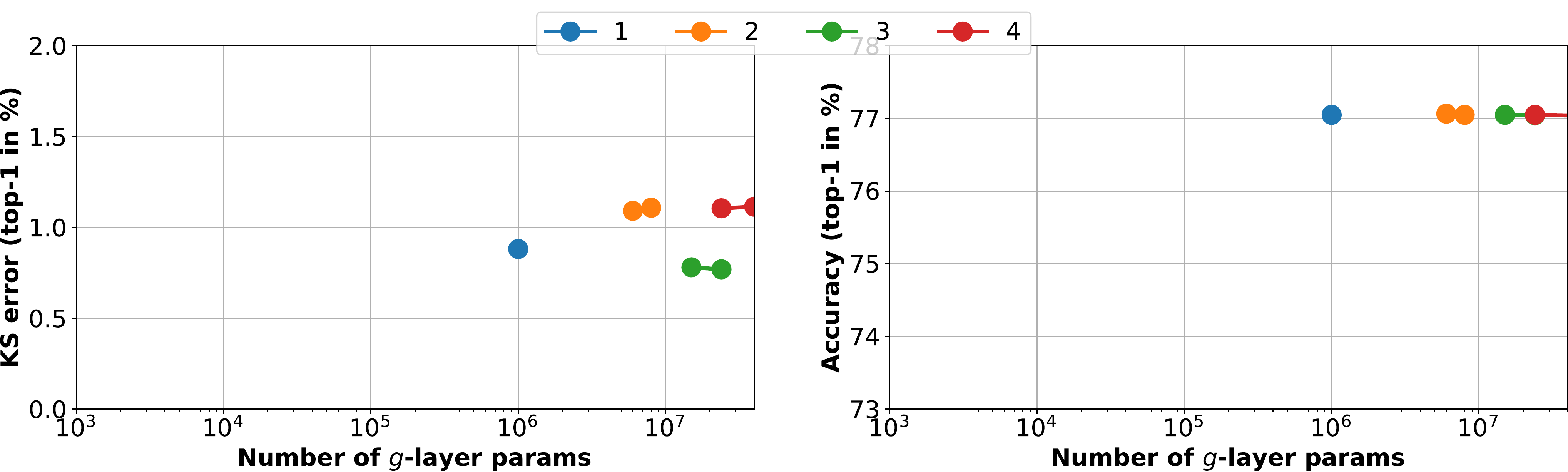}
        \caption{ILSVRC'12 / ImageNet -- DenseNet 161}
    \end{subfigure}\vspace{5mm}
    
    \begin{subfigure}[b]{\textwidth}
        \centering
        \includegraphics[height=45mm]{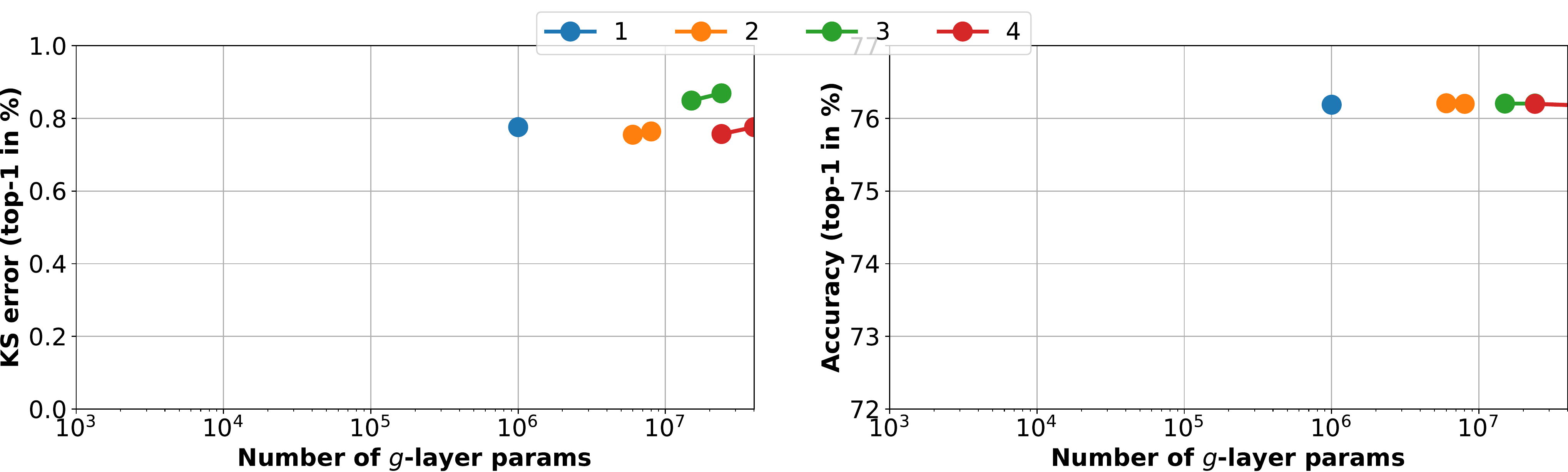}
        \caption{ILSVRC'12 / ImageNet -- ResNet 152}
    \end{subfigure}    
    \caption{Calibration (\textbf{left}) and Accuracy (\textbf{right}) as function of number of parameters. Number of parameters change by the number of dense layers (colors) and number of dense units per layer ($\{3, 4\}$ times number of classes + 2).}
    \label{fig:denselayers_denseunits_part3}
\end{figure*}
In this set of experiments we vary the number of parameters in the $g$-layer networks by changing the number of layers and the number of dense units per class. We measure the KS error (top 1) and the accuracy. The results are in~\fig{denselayers_denseunits_part1}-~\fig{denselayers_denseunits_part3}.

\subsection{Layer Initialisation}
\begin{figure*}[p]
    \centering
    \begin{subfigure}[b]{.85\textwidth}
        \includegraphics[width=\textwidth]{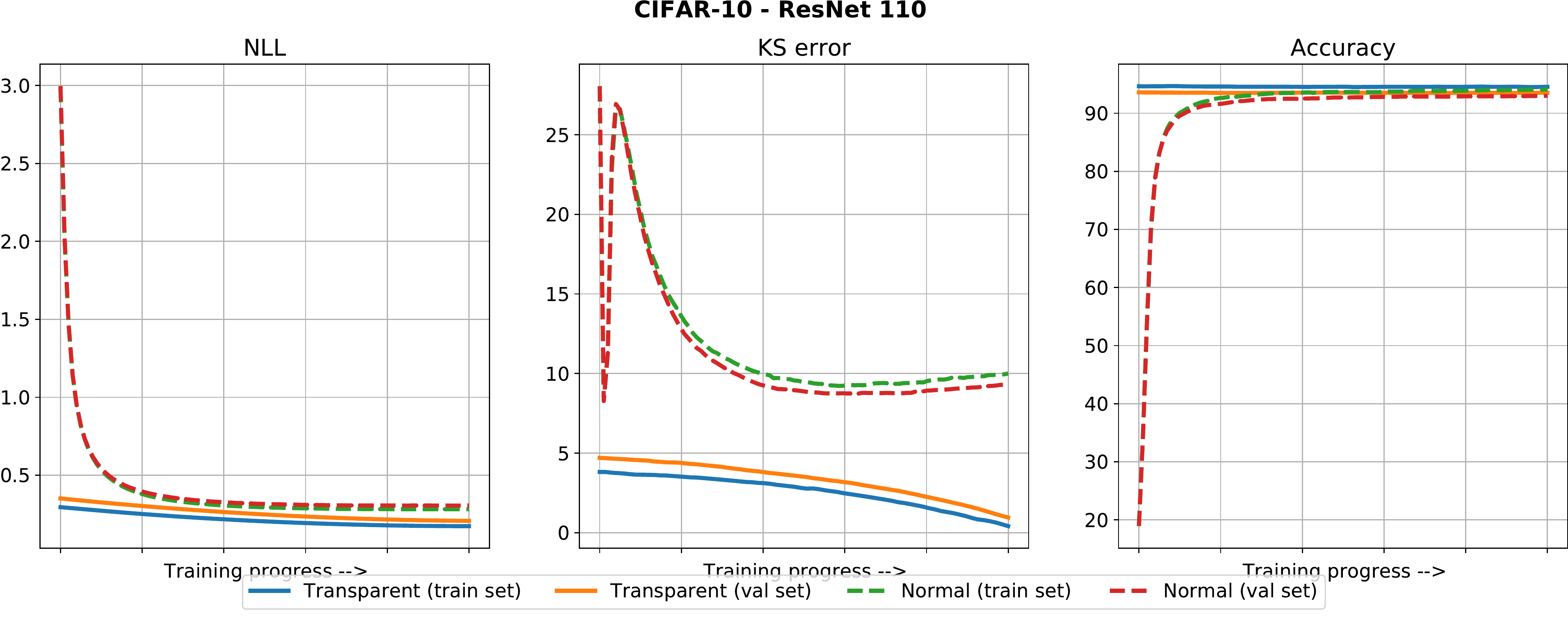}
        \caption{CIFAR-10 - ResNet 110: Transparent vs Normal layer initialisation}
    \end{subfigure}
    \vspace{5mm}
    
    \begin{subfigure}[b]{.85\textwidth}
        \includegraphics[width=\textwidth]{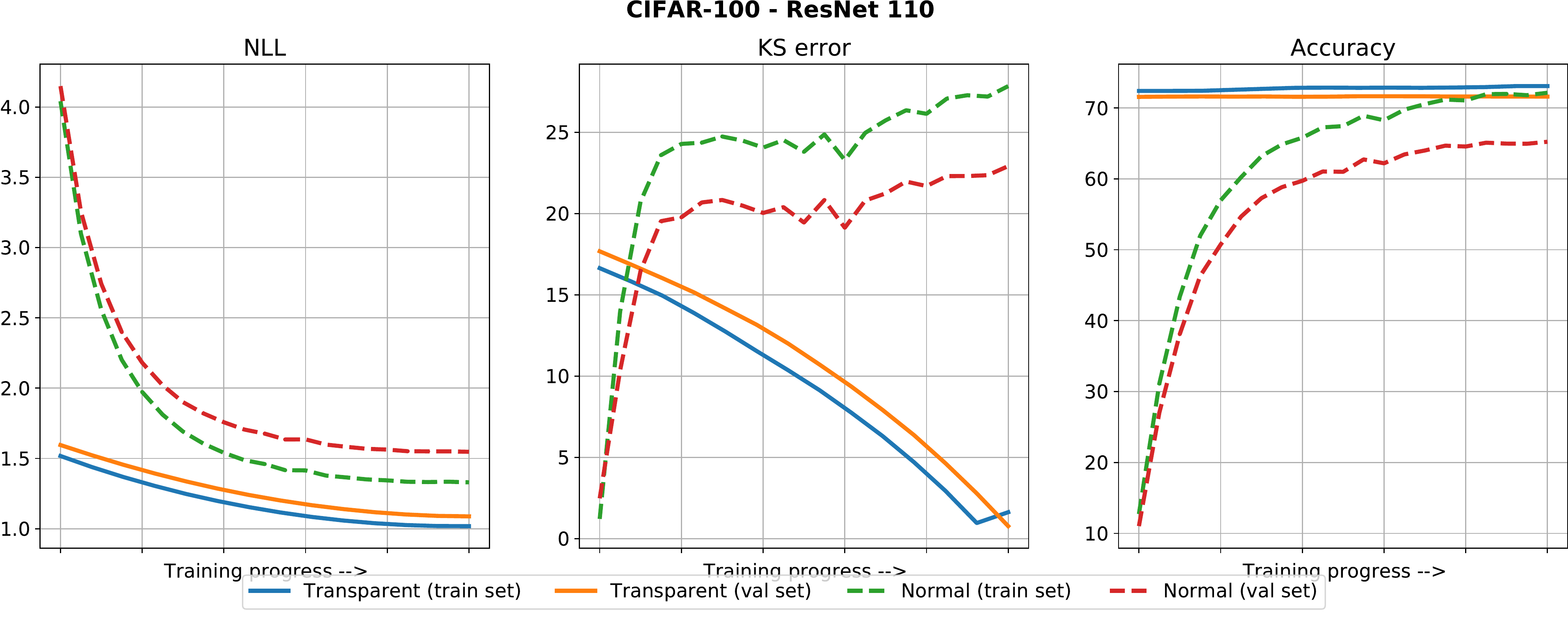}
        \caption{CIFAR-100 - ResNet 110: Transparent vs Normal layer initialisation}
    \end{subfigure}
    \vspace{5mm}
    
    \begin{subfigure}[b]{.85\textwidth}
        \includegraphics[width=\textwidth]{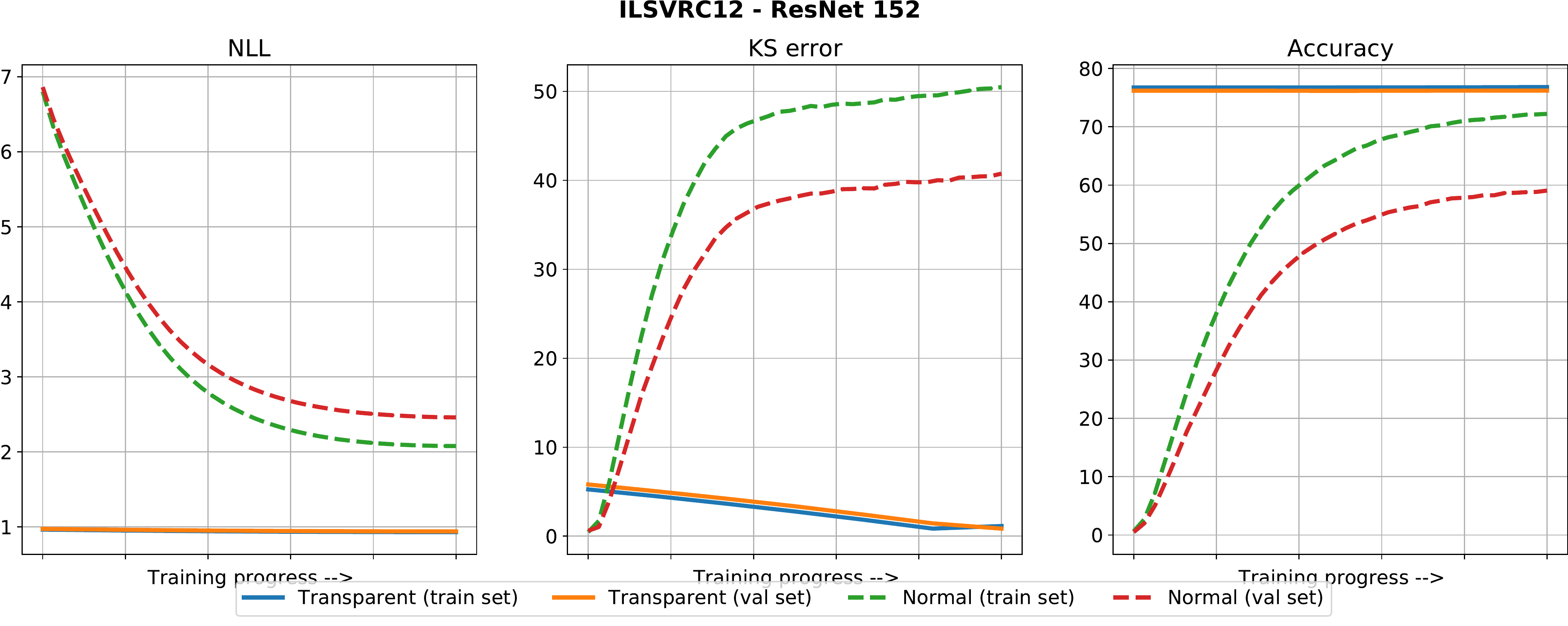}
        \caption{ImageNet - ResNet 152: Transparent vs Normal layer initialisation}
    \end{subfigure}
    
    \caption{Negative log likelihood, KS error and accuracy measured during training of $g$-layer networks either initialised with transparent layers or with normal (random Glorot \cite{glorot10aistats}) initialisation.}
    \label{fig:transparent_initialisation}
\end{figure*}
In this last set of experiments we compare the initialisation of the layers, using transparent initialisation, which ensure an identity transformation, or with standard random initialisation (using Glorot initialisation~\cite{glorot10aistats}).

For this comparison we use ResNet 110 for CIFAR-10 and CIFAR-100 and ResNet 152 for ImageNet, using a 3 $g$-layer network with 3 hidden units per class (resulting in 32, 302, and 3002 hidden units respectively). For each model, after each training epoch, we compute the negative log-likelihood (NLL), KS error and accuracy for the train set and test set. Since our models use early stopping, we re-normalize the x-axis to range from 0 - 100\% training progress (instead of the number of epochs). 
For the transparent model the hyper-parameters from the cross validation search, for the normal (random initialised) models we manually tune the learning rate to get decent performance. \textbf{Note} the goal of this experiment is to show the benefit of transparent initialisation for $g$-layer training, not to get the best accuracy when trained from random initialisation. 

The results are in~\fig{transparent_initialisation}.
From the results we observe that transparent initialisation ensures that the accuracy remains at the same level of the base network. This is in stark contrast with normal (random initialised) models, where the accuracy starts from random performance. 
Subsequently, it seems that the while the normal models are able to learn the correct classification, this comes at the cost of their calibration. 
Hence we conclude that for \emph{calibration} with $g$-layers transparent initialisation is preferred.

\bibliography{glayers_aaai}
\end{document}